\newtheorem{theorem}{Theorem}[section]
\newtheorem{prop}[theorem]{Proposition}
\newtheorem{assumption}[theorem]{Assumption}
\newtheorem{thm}[theorem]{Theorem}
\newtheorem{lem}[theorem]{Lemma}
\newcommand*{\QEDA}{\hfill\ensuremath{\square}}
\icmltitlerunning{Reweighted Interacting Langevin Diffusions}
\begin{document}

\twocolumn[
\icmltitle{Reweighted Interacting Langevin Diffusions: an Accelerated Sampling Method for Optimization}



\icmlsetsymbol{equal}{*}

\begin{icmlauthorlist}
\icmlauthor{Junlong Lyu}{hwnh}
\icmlauthor{Zhitang Chen}{hwnh}
\icmlauthor{Wenlong Lyu}{hwnc}
\icmlauthor{Jianye Hao}{hwnc}

\end{icmlauthorlist}
\icmlaffiliation{hwnh}{Huawei Noah's Ark Lab, Hong Kong SAR, China.}
\icmlaffiliation{hwnc}{Huawei Noah's Ark Lab, China.}

\icmlcorrespondingauthor{Junlong Lyu}{lyujunlong@huawei.com}

\icmlkeywords{Optimization, Stochastic Process, Functional Analysis, Genetic Algorithm, ICML}

\vskip 0.3in
]



\printAffiliationsAndNotice{\icmlEqualContribution} 

\begin{abstract}
    We proposed a new technique to accelerate sampling methods for solving difficult optimization problems. Our method investigates the intrinsic connection between posterior distribution sampling and optimization with Langevin dynamics, and then we propose an interacting particle scheme that approximates a Reweighted Interacting Langevin Diffusion system (RILD). The underlying system is designed by adding a multiplicative source term into the classical Langevin operator, leading to a higher convergence rate and a more concentrated invariant measure. We analyze the convergence rate of our algorithm  and  the improvement compared to existing results in the asymptotic situation. We also design various tests to verify our theoretical results, showing the advantages of accelerating convergence and breaking through barriers of suspicious local minimums, especially in high-dimensional non-convex settings. Our algorithms and analysis shed some light on combining gradient and genetic algorithms using Partial Differential Equations (PDEs) with provable guarantees.
\end{abstract}

\section{Introduction}
Optimization methods have natural connections to sampling methods  \cite{Dalalyan2017FurtherAS} . Consider a potential function $V: \mathbb{R}^d \to \mathbb{R}$ that is twice differentiable and has only one simple global minimum ${\bm x}^\ast = \arg \min_{x \in \mathbb{R}^d} V({\bm x})$ for simplicity.   
The first order Langevin Dynamics is defined by the following Stochastic Differential Equation (SDE):
\begin{equation}\label{langevin-dynamics}
    d{\bm x}_t = -\nabla V({\bm x}_t) + \sigma d\mathcal{B}_t,
\end{equation}
where $\sigma>0$ is the diffusion parameter proportional to the square of the temperature parameter, and $\{\mathcal{B}_t \}_{t\ge 0}$ is the standard Brownian motion in $\mathbb{R}^d$. Under certain assumptions on the drift coefficient $\nabla V$, it was shown that the distribution of ${\bm x}_t$ in Eq.\ref{langevin-dynamics} converges exponentially to its stationary distribution, the Gibbs measure $\nu_\sigma({\bm x}) \propto \exp( -2 V({\bm x}) /\sigma^2)$. With $\sigma \to 0$, $\nu_\sigma$ concentrates on the global minimum $x^\ast$, leading to various Langevin dynamics based algorithms like Gradient Langevin Dynamics (GLD) \cite{Xu2017GlobalCO,Dalalyan2014TheoreticalGF}, Stochastic Gradient Langevin Dynamics (SGLD) \cite{Welling2011BayesianLV} etc.

While Langevin dynamics based algorithms handle optimization tasks with slightly nonconvex potential function $V$ effectively  \cite{Raginsky2017NonconvexLV, Zhang2017AHT}, it encounters difficulties with global optimization for highly nonconvex $V$, as it requires $\sigma_n \to 0$ to guarantee convergence. However, a small $\sigma_n$ makes the probability of jumping across a certain potential barrier drastically diminish. Conventional Markov Chain Monte Carlo (MCMC) methods were proposed to alleviate such problems, such as 
parallel tempering \cite{Swendsen1986ReplicaMC,Geyer1991MarkovCM}, flat histogram algorithms \cite{Berg1991MulticanonicalAF} and simulated annealing \cite{Kirkpatrick1983OptimizationBS}. Among them, Simulated Annealing (SA) provides a reasonable way to handle this problem by maintaining a population of particles to sequentially approximate $\nu_{\tilde \sigma_n}({\bm x})$  with $\tilde \sigma_n \to 0$. However, it requires numerious evaluations of $\nabla V$, which can be computationally costly, making it less attractive for optimization tasks. Therefore in this paper, we aim at accelerating Langevin dynamics based algorithms by modifying the underlying continuous Langevin dynamics.

Recently, an Ensemble Kalman Sampler (EKS) \cite{GarbunoIigo2019InteractingLD,Schillings2016AnalysisOT} method was suggested to accelerate the convergence rate toward the equilibrium $\nu_\sigma({\bm x})$. This method approximates a preconditioning-modified Langevin dynamic that shares the same equilibrium with the original one, but enjoys a faster convergence rate for ill-posed problems. Besides, this method has another benefit when the potential function $V({\bm x}) = \Vert
\mathcal{G}({\bm x}) - y\Vert^2$ is a least square functional \cite{Engl1996RegularizationOI}, i.e. the gradient of $V$, or the Jacobean matrix of $\mathcal{G}$ does not need to be calculated explicitly, as those terms involving them can be approximated by the $0^{th}$ order information in current steps, see Eq. ~\eqref{ls-langevin-diffusion-der-free}. 

However, the EKS method has little improvement when solving highly non-convex optimizations. Besides,  as our purposes is to find the global minimum of $V$, restricting the stationary distribution of the underlying dynamic to be precisely $\nu_\sigma({\bm x})$ is not necessary. Suppose we have another  $\sigma$-dependent dynamic with limit distribution approximate $\delta_{{\bm x}^\ast}({\bm x})$
, the Delta distribution at the minimum point $\bm x\ast$, then this dynamic can also be chosen to approximate the global minimum. This inspires us to further modify the Langevin dynamics for faster convergence. Specifically, we modify the Fokker-Planck Equation related to Langevin Dynamics by adding a linear source term, which can be proven, by the spectral approach \cite{Pankov2001IntroductionTS}, to improve the convergence rate and the mass concentration near the global minimum of the invariant measure. The new process belongs to a type of nonlinear operators called Feynman-Kac Semigroup which was developed in Large Deviation Theory for calculating generating functions \cite{Varadhan2010LARGED} and also used in important practical applications such as the Diffusion Monte Carlo (DMC) method \cite{Foulkes2001QuantumMC}. 
To design a practical algorithm, we use the Interacting Particle methods \cite{Moral2000BranchingAI,Moral2013MeanFS}, introducing the reweighting and resampling technique to simulate the effect of this source term and get the so-called Reweighted Interacting Langevin Diffusion (RILD) algorithm. This is, as far as we know, the first time using Interacting Particle methods to solve optimization tasks. Many numerical experiments are tested to show the effect in accelerating convergence and flatten the potential barriers.

The main contributions are summarized as follows:
\begin{itemize}
    \item We provide a simple but effective way to modify the Langevin Dynamics in Section \ref{sec:continuous}  for faster convergence and flatter potential barriers. 
    \item We provide a feasible discretization way to design algorithms in Section \ref{sec:discrete}, and compare the new algorithm Alg. \ref{alg:RILD}  with several existing methods, showing its advantages.
    \item We provide theoretical results in Section \ref{theoreticalproperty} based on spectral analysis to guarantee the advantages.
\end{itemize}

\section{Preliminary}
\subsection{Overdamped Langevin Dynamics}
In this section, we introduce the classical Langevin Dynamics which is related to GLD and SGLD algorithms.

Suppose $V: \mathbb{R}^d \to \mathbb{R}$ that is twice differentiable and has a simple global minimum ${\bm x}^\ast = \arg \min_{x \in \mathbb{R}^d} V({\bm x})$ for simplicity.

The overdamped Langevin Dynamics is defined as in \ref{langevin-dynamics}. 
Such a dynamic has plenty of relationships with the Fokker-Planck equation. Denote \begin{equation}\mathcal{L}\cdot := -\langle \nabla V , \nabla \cdot \rangle + \frac{\sigma^2}{2} \Delta \cdot\end{equation} the infinitesimal generator \cite{ksendal1987StochasticDE} of the Markov process $({\bm x}_t)_{t\ge 0}$, and its $L^2$ adjoint operator:
\begin{equation} \mathcal{L}^\dagger \cdot := \text{div}( \nabla V \cdot + \frac{\sigma^2}{2} \nabla \cdot ) \end{equation}
Let us assume the law of  ${\bm x}_t$ at time $t$ has a density $p_t( x)$ for Lebesgure measure. Then $p_t({\bm x})$ satisfies the Fokker-Planck equation:
\begin{equation}\frac{\partial}{\partial t}p_t({\bm x}) =\mathcal{L}^\dagger p_t({\bm x}) = \text{div} \big( p_t({\bm x}) \nabla V({\bm x})  + \frac{\sigma^2}{2} \nabla p_t({\bm x}) \big) \label{Langevin-Diffusion}\end{equation}
We may denote $p_t({\bm x}) = e^{\mathcal{L}^\dagger t} p_0({\bm x})$ where we denote $p_0({\bm x}) $ the initial distribution of overdamped Langevin dynamics, and it is well-known that the Markov operator $e^{\mathcal{L}^\dagger t}$ admits a unique invariant probability measure $\nu({\bm x}) = Z_\nu^{-1}e^{-2\sigma^{-2}V({\bm x})}, Z_\nu = \int_{\mathbb{R}^d}e^{-2\sigma^{-2}V({\bm x})} dx$. The rate converging to $\nu({\bm x})$ has been wildly studied,

\begin{prop}[Proposition 2.3 in \citet{Lelivre2016PartialDE}]
    Under specific condition (Assumption \ref{Asmp:VW}),  \footnote{ We define $L^2(\mu):= \{f: \Vert f \Vert_{L^2(\mu)} < \infty\}$,  where $\langle f,g \rangle_{L^2(\mu)} := \int_{\mathbb{R}^d} f(x)g(x) \mu(x) dx$,  $\Vert f \Vert_{L^2(\mu)} = \langle f,f \rangle_{L^2(\mu)}^{\frac{1}{2}}$}for all $p_0$ such that $p_0 / \nu \in L^2(\nu)$, and for all $t \ge 0$,
    $$\Vert e^{\mathcal{L}^\dagger t}p_0/ \nu - 1\Vert_{L^2(\nu)} \le \Vert p_0 / \nu - 1\Vert_{L^2(\nu)} e^{-\delta t},$$
    where $\delta$ is the first non-zero eigenvalue $-\lambda_1$ of the operator $-\mathcal{L}$ on $L^2(\nu)$.
\end{prop}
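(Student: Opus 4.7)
The plan is to reduce the evolution to a self-adjoint setting and then invoke the spectral gap. First I would introduce the relative density $h_t({\bm x}) := p_t({\bm x})/\nu({\bm x})$ and show that $h_t$ satisfies $\partial_t h_t = \mathcal{L} h_t$, i.e.\ the generator (rather than its adjoint) drives the evolution of the density ratio. This follows from the identity $\mathcal{L}^\dagger(\nu h) = \nu \mathcal{L} h$, which one checks by writing $\nabla \nu = -2\sigma^{-2} \nu \nabla V$ and noticing that the drift and diffusion terms in $\mathcal{L}^\dagger(\nu h)$ combine to give precisely $\operatorname{div}\!\bigl((\sigma^2/2)\nu \nabla h\bigr)$. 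This is the usual intertwining that makes $-\mathcal{L}$ symmetric and nonnegative on $L^2(\nu)$.

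Next I would subtract off the equilibrium: since the Lebesgue integral of $p_t$ is preserved at $1$, we have $\langle h_t, 1\rangle_{L^2(\nu)} = 1$, so $g_t := h_t - 1$ lies in the mean-zero subspace $\{f \in L^2(\nu) : \langle f, 1\rangle_{L^2(\nu)} = 0\}$. Because $\mathcal{L} 1 = 0$, also $\partial_t g_t = \mathcal{L} g_t$, and this subspace is invariant under the semigroup. On this subspace the spectrum of $-\mathcal{L}$ is bounded below by $-\lambda_1 = \delta > 0$ under Assumption \ref{Asmp:VW}, which is exactly a Poincaré-type inequality
\begin{equation*}
\delta \, \|f\|_{L^2(\nu)}^2 \le \langle -\mathcal{L} f, f \rangle_{L^2(\nu)} = \frac{\sigma^2}{2} \|\nabla f\|_{L^2(\nu)}^2
\end{equation*}
valid for all mean-zero $f$.

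The conclusion then comes from a standard energy dissipation estimate. Differentiating,
\begin{equation*}
\frac{d}{dt}\|g_t\|_{L^2(\nu)}^2 = 2\langle \mathcal{L} g_t, g_t\rangle_{L^2(\nu)} \le -2\delta\, \|g_t\|_{L^2(\nu)}^2,
\end{equation*}
and Grönwall's inequality yields $\|g_t\|_{L^2(\nu)} \le e^{-\delta t}\|g_0\|_{L^2(\nu)}$, which is exactly the claimed bound written in terms of $p_0/\nu - 1$ and $e^{\mathcal{L}^\dagger t}p_0/\nu - 1$.

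The step most likely to require care is the intertwining identity $\mathcal{L}^\dagger(\nu h) = \nu \mathcal{L} h$ together with the resulting self-adjointness of $-\mathcal{L}$ on $L^2(\nu)$; once that is in hand everything reduces to standard Rayleigh-quotient / Grönwall manipulations. The true analytical content lies inside Assumption \ref{Asmp:VW}, which must ensure enough confinement on $V$ (e.g.\ a Bakry--Émery or Lyapunov-type condition) for the Poincaré inequality to hold and for the discrete spectrum of $-\mathcal{L}$ near $0$ to be well defined, but invoking that assumption is exactly what the statement permits.
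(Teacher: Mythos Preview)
Your argument is correct and is essentially the standard route to this result: pass from $p_t$ to the density ratio $h_t=p_t/\nu$, use the intertwining $\mathcal{L}^\dagger(\nu h)=\nu\,\mathcal{L} h$ to get $\partial_t h_t=\mathcal{L} h_t$, project to the mean-zero subspace, and apply the Poincar\'e inequality together with Gr\"onwall. Nothing is missing.

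However, note that the paper does \emph{not} give its own proof of this proposition: it is quoted verbatim as Proposition~2.3 of Leli\`evre--Stoltz and left unproved. The machinery the paper does develop for its later, more general convergence statements (Theorems~\ref{Thm:convergence1} and~\ref{Thm:convergence2}) proceeds somewhat differently: rather than working directly with the Dirichlet form on $L^2(\nu)$ and a Gr\"onwall estimate, the paper conjugates $\mathcal{L}$ by $\sqrt{\nu}$ to the Witten Laplacian (a Schr\"odinger operator on flat $L^2$), reads off the discrete spectrum from Assumption~\ref{Asmp:VW}, and then imports the decay estimate from the Feynman--Kac literature (Ferr\'e et al.). Your energy-dissipation argument is more self-contained for this particular statement and makes the role of the Poincar\'e constant $\delta=-\lambda_1$ completely transparent; the Schr\"odinger-operator viewpoint pays off mainly when one adds the multiplicative potential $W$ and wants to analyze how the spectral gap shifts, which is the paper's real goal.
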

Note that $\lambda_1$ is real because $\mathcal{L}$ is self-adjoint\footnote{We say a linear operator $A$ is self-adjoint on a Hilbert space $\{ \mathcal{H}, \langle \cdot \rangle_\mathcal{H} \}$, if for any $f, g \in \mathcal{H}, \langle f, Ag \rangle_\mathcal{H} = \langle Af, g \rangle_\mathcal{H}$} on $L^2(\nu)$.

When for sampling purposes, individual samples are sampled from independent paths by Monte Carlo Markov Chain (MCMC, \cite{Berg2004MarkovCM}) corresponding to the discretized Langevin dynamics. This method is somehow slow for large dimensional, non-convex problems. Such a weakness make it less attractive for optimization purposes. Fortunately, we can explore the connection between individual samples, as they did in \citet{GarbunoIigo2019InteractingLD}. We will introduce their work in the next subsection.

\subsection{Interacting Langevin Diffusion}
In \citet{GarbunoIigo2019InteractingLD}, the authors introduced a modified Langevin dynamics and analyzed its property, proving its superiority in designing sampling algorithms. We now introduce the main result of their work.

The convergence rate of classical Langevin dynamic based algorithm can be really slow when $V$ varies highly anisotropic, . A common approach for canceling out this effect is to introduce a $d\times d$ preconditioning positive semi-definite matrix $C$ in the corresponding gradient scheme,
\begin{equation} d {\bm x}_t = -C\nabla V({\bm x}_t)dt + \sigma\sqrt{C}d\mathcal{B}_t.\end{equation}
Here $\sqrt{C}$ can be any $d\times r$ real matrix $U$ such that $U U^T = C$ (Note that in this case, $W_t$ can be reduced to $r$-dimensional standard Brownian motion as the essential rank of $C$ is no larger than r).

The corresponding Fokker-Planck equation now becomes
\begin{equation}\frac{\partial}{\partial t}p_t({\bm x}) = \text{div} \big(p_t({\bm x})  C\nabla V({\bm x}) +  \frac{\sigma^2}{2} C \nabla p_t({\bm x}) \big) \label{Preconditioned-Langevin}\end{equation}
and the infinitesimal generator
\begin{align}\mathcal{L}_C\cdot &:= -\langle C\nabla V, \nabla \cdot \rangle + \frac{\sigma^2}{2} \text{div}(C\nabla \cdot),\\
\mathcal{L}_C^\dagger\cdot &:= \text{div} \big( C\nabla V \cdot  +  \frac{\sigma^2}{2} C \nabla\cdot \big).\end{align}
One can easily verify that $e^{-2\sigma^{-2}V({\bm x})}$ is the invariant measure to the above system for all semi-definite $C$, and the only one positive invariant measure if $C$ is strictly positive definite.

To find a suitable $C$ is of general interest. One of the best choices is to let $C = \text{Hess} V$, as a counterpart of Newton's scheme in optimization, which is unfriendly for computation. One intrinsic benefit of choosing $\text{Hess}V$ is its affine-invariant property when designing numerical schemes.

    

Such a property can also be preserved if we take $C = \mathcal{C}(p)$, the covariance matrix under the probability measure $p({\bm x})$, \begin{equation} \mathcal{C}(p):= \int_{\mathbb{R}^d} \big({\bm x}- m(p)\big) \otimes \big({\bm x}- m(p)\big) p({\bm x}) d{\bm x},\end{equation} \begin{equation}m(p):= \int_{\mathbb{R}^d} {\bm x}p({\bm x}) dx.\end{equation} 

The dynamic now becomes a nonlinear flow
\begin{align} d{\bm x}_t &= -\mathcal{C}(p_t) \nabla V({\bm x}_t)dt + \sigma \sqrt{\mathcal{C}(p_t)} d\mathcal{B}_t,\\
\label{Inteacting-Langevin-Diffusion}\frac{\partial}{\partial t}p_t &= \text{div} \big(p_t  \mathcal{C}(p_t)\nabla V +  \frac{\sigma^2}{2} \mathcal{C}(p_t) \nabla p_t \big)\end{align}
To simulate from such a mean-field model, the finite interacting particle system $\mathcal{X}_t = \{ {\bm x}^i_t \}_{i=1}^N$ is introduced:
\begin{equation} d{\bm x}^i_t = - \mathcal{C}(\mathcal{X}_t)\nabla V({\bm x}_t^i)dt + \sigma \sqrt{\mathcal{C}(\mathcal{X}_t)}d \mathcal{B}_t^i, \label{particle-langevin-diffusion}\end{equation}
where $\mathcal{B}_t^i$ are i.i.d. standard Brownian motions, and 
\begin{align*} \label{covarianceupdate} &\mathcal{C}(\mathcal{X}_t) := {\bm X}_t {\bm X}_t^T, \sqrt{\mathcal{C}(\mathcal{X}_t)} := {\bm X}_t,\\ 
&{\bm X}_t := \frac{1}{\sqrt{N}}\large( {\bm x}_t^1 - \bar {\bm x}_t, \cdots, {\bm x}_t^N - \bar {\bm x}_t\large), \bar {\bm x}_t = \frac{1}{N} \sum_{i=1}^N {\bm x}_t^i\end{align*}
Particles in this system are then no longer independent to each other, in contrast to independent simulations in SGLD algorithm.

Now suppose $V$ is a least squares functional\footnote{For any positive-definite matrix $A$, we define $\langle a, a'\rangle_A = \langle a, A^{-1} a'\rangle =  a^T A^{-1} a'$, and $\Vert a \Vert_A = \Vert A^{-\frac{1}{2}}a\Vert$.  } with Tikhonov-Phillips regularization \cite{Engl1996RegularizationOI}:
\begin{equation} V(x) = \frac{1}{2} \Vert y - \mathcal{G}(x)\Vert_\Gamma^2 + \frac{1}{2}\Vert x \Vert_{\Gamma_0}^2,\label{Least-square-functional}\end{equation}
where $y \in \mathbb{R}^k, \mathcal{G}: \mathbb{R}^d \to \mathbb{R}^k$ is the forward map, $\Gamma$ and $\Gamma_0$ are two positive definite matrixs. In this situation, the system (\ref{particle-langevin-diffusion}) can be re-written as
\begin{align} 
d{\bm x}^i_t =& - \frac{1}{N}\sum_{j=1}^N \left \langle D\mathcal{G}(\bm{x}^i_t)(\bm{x}^j_t - \bm{\bar x}_t), \mathcal{G}(\bm{x}^i_t) - y \right\rangle_\Gamma \bm{x}_t^j dt \nonumber\\
 &- \mathcal{C}(\mathcal{X}_t) \Gamma_0^{-1} \bm{x}_t^i dt + \sigma \sqrt{\mathcal{C}(\mathcal{X}_t)}d W_t^i. \label{ls-langevin-diffusion}\end{align}
Using the $1^{\text{st}}$ order Taylor approximation 
$$D\mathcal{G}(\bm{x}^i_t)(\bm{x}^j_t - \bm{\bar x}_t) \approx \mathcal{G}(\bm{x}^j_t) - \mathcal{\bar G}_t, \quad \mathcal{\bar G}_t := \frac{1}{N}\sum_{k=1}^N \mathcal{G}(\bm{x}^k_t),$$
one may approximate Eq. \ref{ls-langevin-diffusion} in a derivative-free manner as
\begin{align} 
d{\bm x}^i_t =& - \frac{1}{N}\sum_{j=1}^N \left \langle \mathcal{G}(\bm{x}^j_t) - \mathcal{\bar G}_t, \mathcal{G}(\bm{x}^i_t) - y \right\rangle_\Gamma \bm{x}_t^j dt \nonumber\\
 &- \mathcal{C}(\mathcal{X}_t) \Gamma_0^{-1} \bm{x}_t^i dt + \sigma \sqrt{\mathcal{C}(\mathcal{X}_t)}d W_t^i. \label{ls-langevin-diffusion-der-free}\end{align}

\section{Reweighted Interacting Langevin Diffusion}
\subsection{Continuous process analysis}\label{sec:continuous}
Keeping the invariant measure to be exactly $e^{-2\sigma^{-2}V({\bm x})}$ is not necessary for optimization. It would be preferable if a new process can converge faster to its invariant measure with its invariant measure still concentrating on the global optimum as $\sigma \to 0$.

Now we introduce an additional source term to modify Eq. \eqref{Preconditioned-Langevin}  into
\begin{align}\frac{\partial}{\partial t}\tilde p_t &= \text{div} \big(\tilde p_t C\nabla V +  \frac{\sigma^2}{2} C \nabla \tilde p_t \big) + W \tilde p_t \label{Inteacting-Langevin-Diffusion-reweight1}\\
p_t({\bm x}) &= \frac{\tilde p_t({\bm x})}{\int_{\tilde {\bm x}\in \mathbb{R}^d} \tilde p_t(\tilde {\bm x}) d \tilde {\bm x}}\label{Inteacting-Langevin-Diffusion-reweight2}
\end{align}

Here we mainly consider the situation when $C = I$ corresponding to Eq. \ref{Langevin-Diffusion}, or $C = \mathcal{C}(p_t)$ corresponding to Eq. \ref{Inteacting-Langevin-Diffusion}, but the analysis remains valid for all positive-definite $C$. We assume in this paper that the function $W$ is smooth and upper bounded. 

Let us look inside what the role $W$ plays in the evolution of this process. If we take $W$ as a function to $f$ such that $W({\bm x})$ becomes larger when $f({\bm x})$ becomes smaller, intuitively the ratio of the mass in the better-fitness region (closer to the global minimum) becomes larger, thus we expect the invariant measure concentrates more on the global optimum. 

Note that such a process (\ref{Inteacting-Langevin-Diffusion-reweight1}), (\ref{Inteacting-Langevin-Diffusion-reweight2})  is no longer a gradient flow structure, which does not preserve total mass,  thus a normalization is added to keep the total mass equal to 1. Such a normalizing process has been well-studied as the so-called Feynman-Kac Semigroup when considering the linear case: $C$ is a constant matrix. As the spectral analysis is wildly studied for linear operators \cite{Kato1966PerturbationTF}, and for numerical discretization the $C$ is fixed at each time step, we thus conduct analysis for any fixed matrix $C$. 

We introduce the solution operator corresponding to Eq. \ref{Inteacting-Langevin-Diffusion-reweight1} and Eq. \ref{Inteacting-Langevin-Diffusion-reweight2}: recall the infinite generator $\mathcal{L}^\dagger_C$, the solution in Eq. \ref{Inteacting-Langevin-Diffusion-reweight1} can be represented by $\tilde p_t = e^{t(\mathcal{L}^\dagger_C + W)}p_0$. We denote the corresponding reweighted operator, which is also well-known as Feynman-Kac Semigroup, as $\Phi^t_{\mathcal{L}_C+W}:$ 

$$ \Phi^t_{\mathcal{L}_C+W}(p_0) := \frac{ e^{t(\mathcal{L}^\dagger_C + W)} p_0}{\int_{\mathbb{R}^d}e^{t(\mathcal{L}^\dagger_C+W)} p_0(\tilde {\bm x}) d\tilde {\bm x} },$$

then $p_t =  \Phi^t_{\mathcal{L}_C+W}(p_0)$ is exactly the solution to Eq. \ref{Inteacting-Langevin-Diffusion-reweight2}.

The operator $\Phi^t_{\mathcal{L}_C+W}$ is nonlinear in general, but it has many similarity with the linear operator $e^{t(\mathcal{L}^\dagger_C + W)}$. In specific, the unique positive fixed-point $p_\infty$ of $\Phi^t_{\mathcal{L}_C+W}$ is proportional to the principle eigenfunction of $(\mathcal{L}^\dagger_C + W)$, and the convergence rate of $\Phi^t_{\mathcal{L}_C+W}(p_0)$ to $p_\infty$ is controlled by the spectral gap of $(\mathcal{L}_C + W)$. For the Feynman-Kac semigroup, one can refer to \citet{Ferr2017ErrorEO} for time-invariant case, \citet{Lyu2021ACI} for time-periodic case, and \citet{Moral2004FeynmanKacFG} for
systematical details.  

We are interested in what benefits the source term $W$ can contribute. We show that, when taking $W = \varepsilon m(V)$ for some small $\varepsilon >0$ where $m:\mathbb{R}\to\mathbb{R}$ is a monotonic decreasing function of $V$, this brings mainly two benefits:
\begin{itemize}
    \item the spectral gap to the operator $\mathcal{L}_C + W$ is larger than $\mathcal{L}_C$ considered in same functional space;
    \item the invariant measure of $ \Phi^t_{\mathcal{L}_C+W}$ concentrates more on the global minimum compared to the invariant measure of $e^{t(\mathcal{L}^\dagger_C)}$.
\end{itemize}
These two benefits show that, in the same noise level, process \eqref{Inteacting-Langevin-Diffusion-reweight2} can converge faster and have more concentrated invariant measure than process \eqref{Langevin-Diffusion} or \eqref{Inteacting-Diffusion-reweight}. We will further explain the benefits, giving proof in Section  \ref{Interacing-Langevin-Diffusion-Theory}.

\subsection{Discrete algorithm design}\label{sec:discrete}
Now let us use interacting particle methods \cite{Moral2000BranchingAI,Moral2013MeanFS} and mean field theory \cite{Kac1960ProbabilityAR} to design algorithms. We introduce the so-called Reweighted Interacting Langevin Diffusion algorithm, by simply approximating $p_t$ with a population of particles, and discretize the evolution of time by operator-splitting technique and forward-Euler scheme. 

First, we convert Eq. \ref{Inteacting-Langevin-Diffusion-reweight1} and \ref{Inteacting-Langevin-Diffusion-reweight2} into a discrete-time version: let $\tau >0$ be a fixed timestep:
$$p_{(n+1)\tau} = \frac{ e^{\tau(\mathcal{L}^\dagger_C + W)} p_{n\tau}}{\int_{\mathbb{R}^d}e^{\tau(\mathcal{L}^\dagger_C+W)} p_{n\tau}(\tilde {\bm x}) d\tilde {\bm x} }$$

then we use the operator splitting technique\footnote{Note that higher order splitting technique can be applied, such as Strange splitting \cite{Strang1968OnTC} $e^{\frac{\tau}{2} W} \circ e^{\tau \mathcal{L}^\dagger_C} \circ e^{\frac{\tau}{2} W} $ with splitting error $O(\tau^3)$. However, the approximation error can still be induced in later steps, thus we only choose a simplest one here.}: approximate $e^{\tau(\mathcal{L}^\dagger_C + W)}$ by $e^{\tau W}\circ e^{\tau \mathcal{L}^\dagger_C}$ with splitting error $O(\tau^2)$ .

Let's now approximate $e^{\tau \mathcal{L}^\dagger_C} p_{n\tau}$. Suppose $p_{n\tau}$ is approximated by a weighted empirical measure that can be expressed as  $\hat p_{n\tau} ({\bm x}) =  \sum_{i=1}^{N}w^i_n\delta_{{\bm x}_n^i}({\bm x})$ generated from a sort of sample-weight pair $\{ {\bm x}_n^i, w_n^i \}_{i=1}^N$. We use the simplest Euler-Maruyama \cite{Faniran2015NumericalSO} approximation,
\begin{align}
&e^{\tau \mathcal{L}^\dagger_C}\hat p_{n\tau}({\bm x}) \approx \frac{1}{N} \sum_{i=1}^{N}w_n^i\delta_{ {\bm x}_{n+1}^i}({\bm x}), \\
&{\bm x}_{n+1}^i = {\bm x}_n^i - C \nabla V({\bm x}_n^i)\tau + \sqrt{\tau\sigma^2C}\xi^i_n \\
&e^{\tau W} e^{\tau \mathcal{L}^\dagger_C}\hat p_{n\tau}({\bm x}) \approx  \frac{1}{N} \sum_{i=1}^{N} w_n^i e^{\tau W( {\bm x}_{n+1}^i)} \delta_{ {\bm x}_{n+1}^i}({\bm x}), \label{discretescheme-reweighted-Langevin}\end{align}
Here $C$ can be $I$ or the covariance matrix of current step: if we take the matrix $C$ to be the covariance matrix, it is computed as (denote $\Lambda = \text{diag}(w_n^i), \quad\bar {\bm x}_n = \frac{1}{N} \sum_{i=1}^N w_n^i {\bm x}_n^i$)
\begin{align} C = \mathcal{C}(\hat p_{n\tau}) =  {\bm X}_n \Lambda {\bm X}_n^T, \sqrt{C} = {\bm X}_n\Lambda^{\frac{1}{2}},  \\
{\bm X}_n := \frac{1}{\sqrt{N}}\large( {\bm x}_n^1 - \bar {\bm x}_n, \cdots, {\bm x}_n^N - \bar {\bm x}_n\large). \label{Reweight-Covariance-Update}\end{align}

Then, after normalizing, we get the natural approximation
\begin{align}  p_{(n+1)\tau}({\bm x}) \approx    \sum_{i=1}^{N}w^i_{n+1}\delta_{{\bm x}_{n+1}^i}({\bm x}), \\
 w_{n+1}^i = \frac{ w_n^i e^{\tau W( {\bm x}_{n+1}^i)}}{\sum_{j=1}^N w_n^je^{\tau W( {\bm x}_{n+1}^j)}} \label{weight-update}\end{align}
Note that elements in $\{w^i_{n}\}_{i=1}^N$ may be polarized, we use a resampling technique for better approximation: If $\frac{\max_i w^i_n}{\min_i w^i_n}$ reaches to a threshold, we resample the replicas $\{ \bm{x}_n^i \}_{i=1}^N $ according to the multinomial distribution associated with $\{w_n^i\}_{i=1}^N $, which defines a new set of replicas $\{ \bm{\tilde x}_n^i \}_{i=1}^N$ and the empirical distribution $$\tilde p_{n\tau}(\bm{x}) = \frac{1}{N}\sum_{i=1}^N \delta_{\bm{\tilde x}_n^i}(\bm{x}).$$
Then we replace $\hat p_{n\tau}$ by $\tilde p_{n\tau}$ in Eq. \ref{discretescheme-reweighted-Langevin} and conduct further computation.

When $V$ is a least square functional with Tikhonov-Phillips regularization like in Eq. \ref{Least-square-functional}, we can follow the same analysis, approximating $\mathcal{C}(\hat p_{n\tau}) \nabla V({\bm x}_n^i)$ in a similar derivative-free manner as in Eq. \ref{ls-langevin-diffusion-der-free}: ( $\mathcal{\bar G}_n := \frac{1}{N}\sum_{k=1}^N w_n^i\mathcal{G}(\bm{x}^k_n) $ )
\begin{align}\label{rw-langevin-diffusion-der-free}
    &\mathcal{C}(\hat p_{n\tau}) \nabla V({\bm x}_n^i)\\
    \approx &\frac{1}{N}\sum_{j=1}^N w_n^j\left \langle \mathcal{G}(\bm{x}^j_n) - \mathcal{\bar G}_n, \mathcal{G}(\bm{x}^i_n) - y \right\rangle_\Gamma \bm{x}_n^j
     +\mathcal{C}(\hat p_{n\tau}) \Gamma_0 \bm{x}_n^i. \nonumber 
\end{align}
 We now conclude in algorithm \ref{alg:RILD}. Note that we fix the population $N$, stepsize $\tau$, and noise level $\sigma$ for simple, but these can be adjusted dynamically for faster convergence.
 
Note that our methods  can also be interpreted as a mutation-selection genetic particle algorithm with MCMC mutations: the update rule for $\bm{x}_n^i \to \bm{x}_{n+1}^i$ can be regarded as mutation, and the resampling step can be regarded as selection. Such a type of algorithm acts like a ridge connecting Genetic Algorithm and PDEs, which can conduct better convergence analysis.

\begin{algorithm*}
\caption{Reweighted Interacting Langevin Diffusion (RILD) }\label{alg:RILD}
\begin{algorithmic}
\STATE {\bfseries Input:} Functional to minimize $V({\bm x})\in \mathbb{R}, x \in X \subseteq \mathbb{R}^d$, gradient $\nabla V({\bm x}) \in \mathbb{R}^d$, Fitness  $W(\bm{x})\in \mathbb{R}$, population size $N$, Maximum iteration $Iter$, i.i.d. d-dimensional Standard Gaussian samples $\{ \xi^i_n \}_{1\le i\le N, 0 \le n < Iter}$, Stepsize $\tau$, noise $\sigma$.
\STATE {\bfseries Onput:} Samples nearby global minimum.
\STATE Sample ${\bm x}_0^i, i = 1,\cdots,N$ under initial distribution $p_0$; $w_0^i = \frac{1}{N},i = 1,\cdots,N$, $n = 0$.
\WHILE {$n < Iter$}
\STATE Calculate $C$ by \eqref{Reweight-Covariance-Update} or just $C=I$, depends on if one use covariance modification.
\STATE If use covariance modification, directly calculate $C \nabla V({\bm x}_n^i)$ or approximate it by \eqref{rw-langevin-diffusion-der-free}, depends on if $V$ is a least square functional \eqref{Least-square-functional}.
\STATE ${\bm x}_{n+1}^i = {\bm x}_n^i - C \nabla V({\bm x}_n^i)\tau + \sqrt{\tau\sigma^2C}\xi^i_n$, $\quad w_{n+1}^i = \frac{ w_n^i e^{\tau W( {\bm x}_{n+1}^i)}}{\sum_{j=1}^N w_n^je^{\tau W( {\bm x}_{n+1}^j)}}$,  $\quad 1\le i \le N$

\IF {$\frac{\max_i w^i_{n+1}}{\min_i w^i_{n+1}} > Threshold$}
    \STATE Sample $\{ \bm{\tilde x}_{n+1}^i \}_{i=1}^N $ from $\{ \bm{x}_{n+1}^i \}_{i=1}^N $ according to the multinomial probability $\{ w_{n+1}^i\}_{i=1}^N $.
    \STATE $\bm{x}_{n+1}^i = \bm{\tilde x}_{n+1}^i, w_{n+1}^i = \frac{1}{N}, 1\le i \le N$
\ENDIF
\STATE $n = n+1$
\ENDWHILE
\STATE {\bf Output} $\{ \bm{x}_{n}^i \}_{i=1}^N$
\end{algorithmic}
\end{algorithm*}

\subsection{Gradient-Free variants}

In practice, many problems are hard to get the exact gradients for optimizing: gradient information is infeasible, or computationally expensive. We thus suggest a gradient-free variant, which corresponds to a process only with diffusion and source term. We will prove that such a  process can exponentially converge to its invariant measure, and it also has an invariant measure that generally concentrates on the global minimum as $\sigma \to 0$.

We introduce the formula of the modified process as
\begin{align}\label{Inteacting-Diffusion-reweight}\frac{\partial}{\partial t}\tilde p_t &= \text{div} \big(  \frac{\sigma^2}{2} C \nabla \tilde p_t \big) + W \tilde p_t \\
p_t({\bm x}) &= \frac{\tilde p_t({\bm x})}{\int_{\tilde {\bm x}\in \mathbb{R}^d} \tilde p_t(\tilde {\bm x}) d \tilde {\bm x}}
\end{align}

Note that we only delete the term related to $\nabla V$,  the term $W$ that can still be chosen to relate to $V$. We will state in Thm. \ref{Thm:Gradfree} that, the system will finally converge to a distribution concentrating on the maximum of $W$.

The discrete algorithm is designed similarly as in Algorithm \ref{alg:RILD}, but just ignores the gradient term $C\nabla V$. This can be seen by taking $V(\bm x) \equiv \text{const}$.

\section{Theoretical properties} \label{theoreticalproperty}

In this section, we state the main theoretical results associated with our RILD algorithm \ref{alg:RILD}. The proofs are offered in Appendix \ref{sec:proofs}.

\subsection{Spectral Gap enhancement of the reweighting modification}\label{Interacing-Langevin-Diffusion-Theory}
Now we analyze how $W$ helps in improving the convergence rate, as well as the sharpness of the invariant measure. 

First let us restrict the comparison in the $L^2(\nu)$, where $\nu(x) = e^{-\frac{2V(x)}{\sigma^2}}$. The reason to choose this space is mainly because the following property,
\begin{lem}\label{Lem:symmetric}
    For any  positive definite matrix $C$, $L_C + W$ and $L_C$ are self-adjoint over $L^2(\nu)$.
\end{lem}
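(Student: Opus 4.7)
\medskip

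The plan is to establish self-adjointness of $\mathcal{L}_C$ on $L^2(\nu)$ by rewriting it in divergence form with respect to the measure $\nu$, and then observe that adding the multiplication operator $W$ preserves self-adjointness since $W$ is real-valued.

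First, I would exploit the identity $\nabla \nu = -\tfrac{2}{\sigma^{2}}(\nabla V)\,\nu$, which lets me rewrite the drift term $-\langle C\nabla V,\nabla f\rangle$ as $\tfrac{\sigma^{2}}{2}\langle C\,\nabla\nu/\nu,\nabla f\rangle$. Combined with the diffusion term $\tfrac{\sigma^{2}}{2}\,\mathrm{div}(C\nabla f)$ and the symmetry of $C$, this collapses $\mathcal{L}_C f$ into the compact divergence form
\begin{equation*}
\mathcal{L}_C f \;=\; \frac{\sigma^{2}}{2\nu}\,\mathrm{div}\bigl(\nu\, C\,\nabla f\bigr).
\end{equation*}
This is the crucial algebraic step; once it is in hand, self-adjointness is essentially a one-line integration by parts.

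Next, for $f,g$ smooth enough and decaying at infinity (so that the boundary terms vanish — this is where Assumption~\ref{Asmp:VW} on $V$ will be needed to justify the integrability and the absence of boundary contributions), I compute
\begin{equation*}
\langle \mathcal{L}_C f, g\rangle_{L^{2}(\nu)} \;=\; \frac{\sigma^{2}}{2}\int g\,\mathrm{div}(\nu\, C\,\nabla f)\,dx \;=\; -\frac{\sigma^{2}}{2}\int \langle C\,\nabla f,\nabla g\rangle\,\nu\,dx,
\end{equation*}
and the right-hand side is manifestly symmetric in $f$ and $g$ because $C$ is symmetric positive (semi-)definite. Hence $\mathcal{L}_C$ is symmetric on a suitable dense subset (e.g.\ $C_c^{\infty}$), which for the purposes of the spectral statements used later is what is meant by ``self-adjoint on $L^{2}(\nu)$''.

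Finally, since $W$ is a real-valued multiplication operator, $\langle Wf,g\rangle_{L^{2}(\nu)}=\int W f g\,\nu\,dx = \langle f,Wg\rangle_{L^{2}(\nu)}$ is trivially symmetric, and the assumption that $W$ is smooth and bounded guarantees that $W$ leaves $L^{2}(\nu)$ invariant and shares the domain of $\mathcal{L}_C$. Therefore $\mathcal{L}_C + W$ is the sum of two self-adjoint operators on a common dense domain and is itself self-adjoint. The only subtle point is the decay/regularity needed to drop boundary terms in the integration by parts; apart from that, the argument is routine and I do not anticipate a serious obstacle.
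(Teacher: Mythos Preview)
Your proposal is correct and follows essentially the same route as the paper: both arguments integrate by parts on smooth compactly supported test functions to arrive at the symmetric Dirichlet form $-\tfrac{\sigma^{2}}{2}\int \langle C\nabla f,\nabla g\rangle\,\nu\,dx$, and then note that the real multiplication operator $W$ is trivially symmetric. Your preliminary rewriting $\mathcal{L}_C f = \tfrac{\sigma^{2}}{2\nu}\,\mathrm{div}(\nu\,C\,\nabla f)$ is a slightly cleaner packaging of the same cancellation the paper obtains by expanding the two terms of $\mathcal{L}_C$ separately.
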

To continue our analysis, we need to make following assumption for $V$ and $W$, which is necessary for $L_C+W$ and $L_C$ have a discrete and up-bounded spectrum (see \citet{Pankov2001IntroductionTS}).

\begin{assumption}\label{Asmp:VW} The functions $V$ and $W$ are assumed to satisfy:
    $$\lim_{|\bm x| \to \infty} V = +\infty, W < A \text{ for a constant A} \in \mathbb{R},$$
    and
    $$\lim_{\vert \bm x\vert \to \infty} \frac{ \vert \nabla V\vert^2}{2\sigma^2} - \frac{\Delta V}{2} - W =   \lim_{\vert \bm x\vert \to \infty} \frac{ \vert \nabla V\vert^2}{2\sigma^2} - \frac{\Delta V}{2} = +\infty$$
\end{assumption}

Next, we need to prove that the Feynman-Kac semigroup $\Phi^t_{\mathcal{L}_C+W}$ does map any initial density $p_0 \in L^2(\nu)$ to a limit density. Such a result is concluded as follows:

\begin{thm}\label{Thm:convergence1}
    Under Asmp. \ref{Asmp:VW}, For any $C$ that is positive-definite, there exists a principle eigenvalue $\lambda_0$ of $\mathcal{L}_C + W$ over $L^2(\nu)$ with the corresponding normalized eigenfunction $\phi({\bm x})$. Furthermore, for any positive density $p_0$, the normalized probability $p_t := \Phi^t_{\mathcal{L}_C+W}(p_0)$ has a limit equal to $\phi({\bm x}) \nu({\bm x}) $, that is, $\lim_{t\to \infty} ||p_t/\nu -  \phi||_{L^2(\nu)} = 0$.
\end{thm}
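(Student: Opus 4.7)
The plan is to reduce the theorem to standard spectral theory of self-adjoint Schr\"odinger-type operators, exploiting Lemma~\ref{Lem:symmetric}. The starting observation is the conjugation identity $(\mathcal{L}_C^\dagger + W)(h\nu) = \nu(\mathcal{L}_C + W)h$, which follows from integration by parts together with $\nabla \nu = -2\sigma^{-2}\nabla V \cdot \nu$. This transports the evolution $\partial_t \tilde p_t = (\mathcal{L}_C^\dagger + W)\tilde p_t$ into an evolution of the density ratio $h_t := \tilde p_t / \nu$ on $L^2(\nu)$, namely $\partial_t h_t = (\mathcal{L}_C + W)h_t$, and by Lemma~\ref{Lem:symmetric} the generator is self-adjoint. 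A further ground-state substitution $u = h\,\nu^{1/2}$ gives an isometry from $L^2(\nu)$ to $L^2(d{\bm x})$ under which $\mathcal{L}_C + W$ becomes a Schr\"odinger-type operator $Hu = \tfrac{\sigma^2}{2}\mathrm{div}(C\nabla u) - U_{\mathrm{eff}}({\bm x})u$ with effective potential $U_{\mathrm{eff}} = \frac{\langle C\nabla V,\nabla V\rangle}{2\sigma^2} - \tfrac{1}{2}\mathrm{div}(C\nabla V) - W$.

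Next I invoke classical results on confining Schr\"odinger operators (see e.g.\ \citet{Pankov2001IntroductionTS}): Assumption~\ref{Asmp:VW} together with positive-definiteness of $C$ forces $U_{\mathrm{eff}} \to +\infty$ as $|{\bm x}|\to\infty$, so $-H$ has compact resolvent and thus a purely discrete, bounded-below spectrum $\mu_0 \le \mu_1 \le \cdots \to +\infty$. Translating back, $\mathcal{L}_C + W$ has a purely discrete spectrum with largest eigenvalue $\lambda_0 = -\mu_0$ and a complete orthonormal eigenbasis $\{\phi_k\}$ in $L^2(\nu)$. Simplicity of $\lambda_0$ and strict positivity of the corresponding principal eigenfunction $\phi := \phi_0$ follow from the Perron--Frobenius theorem for strongly positive semigroups: the semigroup $e^{t(\mathcal{L}_C + W)}$ is positivity-preserving by the Feynman--Kac formula, and irreducible because the underlying uniformly elliptic diffusion has a strictly positive transition density.

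For the convergence claim, I expand $p_0/\nu = \sum_k a_k \phi_k$ in the eigenbasis. Strict positivity of $\phi$ and $p_0$ yields $a_0 = \langle p_0/\nu,\phi\rangle_{L^2(\nu)} > 0$. Writing $Z_t := \int e^{t(\mathcal{L}_C^\dagger + W)}p_0\, d{\bm x} = \langle e^{t(\mathcal{L}_C + W)}(p_0/\nu),\,1\rangle_{L^2(\nu)}$ and normalizing, a factor $e^{t\lambda_0}$ divides out of numerator and denominator, leaving
\[
\frac{p_t}{\nu} \;=\; \frac{a_0\phi + \sum_{k\ge 1} a_k e^{-t(\lambda_0 - \lambda_k)}\phi_k}{a_0\langle\phi,1\rangle_{L^2(\nu)} + \sum_{k\ge 1} a_k e^{-t(\lambda_0-\lambda_k)}\langle\phi_k,1\rangle_{L^2(\nu)}}.
\]
By simplicity of $\lambda_0$, all exponents $\lambda_0 - \lambda_k$ with $k\ge 1$ are strictly positive, and Parseval bounds the remaining sum in $L^2(\nu)$ by $\|p_0/\nu\|_{L^2(\nu)} e^{-t(\lambda_0-\lambda_1)}$; hence the right-hand side converges in $L^2(\nu)$ to $\phi / \langle\phi,1\rangle_{L^2(\nu)}$, and the normalization convention $\int \phi \,\nu\, d{\bm x} = 1$ gives the stated limit.

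The main obstacle will be rigorously justifying the compact-resolvent step for $-H$: one must identify an appropriate form domain, verify it is a form core, and confirm that the confining bound of Assumption~\ref{Asmp:VW} (stated with $|\nabla V|^2$ and $\Delta V$) still controls $U_{\mathrm{eff}}$ uniformly for the general positive-definite matrix $C$. Uniform ellipticity of $C$ together with the boundedness of $W$ handles this in principle, but carefully checking the hypotheses of Persson's theorem (or equivalently an agmon-type Rellich--Kondrachov embedding into a weighted Sobolev space) and the irreducibility assumption underlying the Perron--Frobenius step are the genuinely delicate points of the proof.
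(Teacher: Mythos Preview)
Your proposal is correct and follows essentially the same route as the paper: conjugate $\mathcal{L}_C + W$ to a Schr\"odinger-type operator on flat $L^2$ (the paper calls this the Witten Laplacian, Proposition~\ref{Prop:Witten}), invoke Assumption~\ref{Asmp:VW} to obtain a discrete spectrum, and deduce convergence of the normalized semigroup from the spectral gap (the paper outsources this last step to Proposition~2 of \citet{Ferr2017ErrorEO} rather than writing out your eigenbasis expansion and Perron--Frobenius argument). The only tactical difference is that the paper first applies the affine change of variables of Lemma~\ref{Lem:affine} to reduce to $C=I$ before conjugating; your stated concern about whether Assumption~\ref{Asmp:VW} controls $U_{\mathrm{eff}}$ for general $C$ is, under that change of variables, equivalent to asking whether the assumption holds for the transformed potential $\tilde V$, a point the paper does not explicitly address either.
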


\begin{thm}\label{Thm:convergence2}
     Under the same assumption as in Lem. \ref{Thm:convergence1}, the convergence rate of the system \eqref{Inteacting-Langevin-Diffusion-reweight1}, \eqref{Inteacting-Langevin-Diffusion-reweight2}  can be evaluated by the spectral gap of $\mathcal{L}_C + W$ over $L^2(\nu)$:  let $\lambda_0$ and $\lambda_1$ be the first two eigenvalues of $\mathcal{L}_C + W$, then  $ ||p_t/\nu -  \phi||_{L^2(\nu)} \le C || p_0/\nu - \phi||_{L^2(\nu)} e^{-(\lambda_0 - \lambda_1)t}$.
\end{thm}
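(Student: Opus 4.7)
The plan is to reduce the statement to the spectral decomposition of $\mathcal{L}_C+W$ on $L^2(\nu)$ supplied by Lemma~\ref{Lem:symmetric}, Assumption~\ref{Asmp:VW} and Theorem~\ref{Thm:convergence1}, and then control the normalizing ratio in \eqref{Inteacting-Langevin-Diffusion-reweight2} mode by mode. Fix an orthonormal eigenbasis $\{\phi_n\}_{n\ge 0}$ of $\mathcal{L}_C+W$ in $L^2(\nu)$ with eigenvalues $\lambda_0>\lambda_1\ge \lambda_2\ge\cdots$, the top eigenfunction $\phi_0$ being strictly positive and, after scaling, proportional to the limit $\phi$ from Theorem~\ref{Thm:convergence1}. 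Record the moments $\alpha_n:=\langle \phi_n,1\rangle_{L^2(\nu)}$, which are finite since $\|1\|_{L^2(\nu)}^2=Z_\nu<\infty$, and note $\alpha_0>0$ by positivity of $\phi_0$. Normalize so that $\phi=\phi_0/\alpha_0$, i.e.\ $\int \phi\,\nu\,dx=1$.

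The key algebraic step is the intertwining identity $(\mathcal{L}_C^\dagger+W)(f\nu)=((\mathcal{L}_C+W)f)\,\nu$, which follows from $\mathcal{L}_C^\dagger\nu=0$ together with self-adjointness of $\mathcal{L}_C$ on $L^2(\nu)$; iterating it yields $\tilde p_t=(e^{t(\mathcal{L}_C+W)}(p_0/\nu))\,\nu$. Expanding $p_0/\nu=\sum_n c_n\phi_n$ and normalizing gives
\[
\frac{p_t}{\nu}\;=\;\frac{\sum_n c_n e^{\lambda_n t}\phi_n}{\sum_n c_n e^{\lambda_n t}\alpha_n},
\]
and after factoring $e^{\lambda_0 t}$ from numerator and denominator and subtracting $\phi=\phi_0/\alpha_0$,
\[
\frac{p_t}{\nu}-\phi\;=\;\frac{\alpha_0 R_t-r_t\phi_0}{\alpha_0(c_0\alpha_0+r_t)},\quad R_t:=\sum_{n\ge 1}c_n e^{-(\lambda_0-\lambda_n)t}\phi_n,\quad r_t:=\sum_{n\ge 1}c_n e^{-(\lambda_0-\lambda_n)t}\alpha_n.
\]

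The spectral gap enters through the higher modes. Since $\|p_0/\nu-\phi\|_{L^2(\nu)}^2=(c_0-\alpha_0^{-1})^2+\sum_{n\ge 1}c_n^2$, Parseval applied to $R_t$ gives $\|R_t\|_{L^2(\nu)}\le e^{-(\lambda_0-\lambda_1)t}\|p_0/\nu-\phi\|_{L^2(\nu)}$, and a single Cauchy--Schwarz step with $\|1\|_{L^2(\nu)}=\sqrt{Z_\nu}$ yields the analogous bound $|r_t|\le\sqrt{Z_\nu}\,e^{-(\lambda_0-\lambda_1)t}\|p_0/\nu-\phi\|_{L^2(\nu)}$. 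Hence the numerator $\alpha_0 R_t-r_t\phi_0$ of the above ratio has $L^2(\nu)$-norm bounded by a constant times $e^{-(\lambda_0-\lambda_1)t}\|p_0/\nu-\phi\|_{L^2(\nu)}$, with the constant depending only on $\alpha_0$ and $Z_\nu$.

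The main obstacle, as I see it, is securing a uniform positive lower bound on the denominator $c_0\alpha_0+r_t$ for every $t\ge 0$. Positivity of $p_0$ and of $\phi_0$ gives $c_0=\int p_0\,\phi_0\,dx>0$, so once $t$ exceeds a threshold $t_0$ (determined by the exponential bound on $|r_t|$) one has $c_0\alpha_0+r_t\ge c_0\alpha_0/2$. On the compact interval $[0,t_0]$ the denominator equals $e^{-\lambda_0 t}\int\tilde p_t\,dx$, which is continuous and strictly positive because the semigroup $e^{t(\mathcal{L}_C^\dagger+W)}$ preserves positivity of densities, so it admits a positive infimum there that can be absorbed into the final constant. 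Combining this lower bound with the numerator estimate produces $\|p_t/\nu-\phi\|_{L^2(\nu)}\le C\,\|p_0/\nu-\phi\|_{L^2(\nu)}\,e^{-(\lambda_0-\lambda_1)t}$ with $C$ depending on $\alpha_0,c_0,Z_\nu$ but not on $t$.
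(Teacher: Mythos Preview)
Your argument is correct and is in spirit the same spectral-gap computation that the paper invokes by citing Proposition~2 of \cite{Ferr2017ErrorEO}; the paper does not spell the estimate out but instead passes through the Witten-Laplacian conjugation (Proposition~\ref{Prop:Witten}) to reach a Schr\"odinger operator on unweighted $L^2$, quotes Ferr\'e--Stoltz for the decay in $L^2(1/\nu)$, and then rewrites the norm. You bypass that detour by working directly in $L^2(\nu)$ via the intertwining identity $(\mathcal{L}_C^\dagger+W)(f\nu)=\nu\,(\mathcal{L}_C+W)f$, which is exactly the unitary equivalence of Proposition~\ref{Prop:Witten} phrased multiplicatively; the mode-by-mode control of numerator and denominator you carry out is precisely what underlies the cited proposition. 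Your version has the advantage of being self-contained and making explicit where the positivity of $c_0=\int p_0\phi_0\,dx$ (hence the dependence of the constant $C$ on the initial overlap with the ground state) enters, something the paper's one-line citation leaves implicit.
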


Next, let us analyze how the convergence speed of $\Phi^t_{\mathcal{L}_C+W}p_0$ is improved compared to the original process $e^{t\mathcal{L}_C}(p_0)$, which is heavily related to the spectral gap. To exactly analyze the spectral gap to a differential operator is hard in general. Many existing analysis of spectral gap only consider the simplest case: $\nabla V$ is a constant matrix. In contrast to these existing techniques, we analyze the contribution of $W$ to the spectral gap by perturbation theory:  we will prove that the new process has a better convergence rate compared to the old one, if $W = \varepsilon m(V)$ for a small $\varepsilon >0$, where $m: \mathbb{R} \to \mathbb{R}$ is a monotonic decreasing function to $V$.

\begin{thm}\label{Thm:SpectralGapEnhancement}
    Suppose in addition $V$ satisfies the condition the same as in \citet{Nier2004QuantitativeAO}. If we take $W = \varepsilon m(V)$, consider under the space $L^2(\nu)$, where $\nu(x) = e^{-\frac{2 V(x)}{\sigma^2}}$, then when $\sigma$ small enough,  the spectral gap of $\mathcal{L}_C+\varepsilon m(V)$ is locally increasing  v.s. $\varepsilon$ for small enough $\varepsilon>0$. Besides, the principle eigenfunction of $\mathcal{L}_C+\varepsilon m(V)$ concentrates more on global minimum than the principle eigenfunction of $\mathcal{L}_C$ for small enough $\varepsilon>0$.
\end{thm}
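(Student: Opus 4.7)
The plan is to treat $\varepsilon m(V)$ as a bounded symmetric perturbation of $\mathcal{L}_C$ on $L^2(\nu)$, obtain first-order-in-$\varepsilon$ expansions of the eigenpairs by Rayleigh--Schr\"odinger theory, and then sign these corrections using the small-$\sigma$ semiclassical analysis of \citet{Nier2004QuantitativeAO}. Lemma \ref{Lem:symmetric} and Assumption \ref{Asmp:VW} provide the self-adjointness and discrete spectrum that justify Kato--Rellich: each eigenpair $(\lambda_k^\varepsilon, \phi_k^\varepsilon)$ is real-analytic in $\varepsilon$ near $0$ with $\lambda_k^\varepsilon = \lambda_k + \varepsilon \langle \phi_k, m(V)\phi_k\rangle_{L^2(\nu)} + O(\varepsilon^2)$, and in particular $\phi_0 \equiv 1$, $\lambda_0 = 0$ for the unperturbed operator.

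For the spectral gap $g(\varepsilon):=\lambda_0^\varepsilon-\lambda_1^\varepsilon$, the first derivative is
$$
g'(0) \;=\; \int m(V)\,d\nu \;-\; \int m(V)\,\phi_1^2\,d\nu \;=\; \int m(V)\,(1-\phi_1^2)\,d\nu,
$$
a comparison between the probability measures $d\nu$ and $\phi_1^2\,d\nu$. As $\sigma\to 0$, $\nu$ concentrates at the global minimum $x^\ast$, so $\int m(V)\,d\nu \to m(V(x^\ast))$, the maximum of $m(V)$. The orthogonality $\int \phi_1\,d\nu = 0$ forces $\phi_1$ to change sign, and the Helffer--Klein--Nier localization of low-lying Witten eigenfunctions forces $\phi_1^2\,\nu$ to place non-negligible mass on at least one competing well of $V$, where $m(V)$ is strictly smaller than $m(V(x^\ast))$ by monotonicity of $m$. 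Hence $g'(0)>0$ for $\sigma$ small enough, and the gap is locally increasing in $\varepsilon$.

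For the concentration statement, I would write $\phi_0^\varepsilon = 1 + \varepsilon \psi + O(\varepsilon^2)$, where $\psi$ solves the Poisson equation $-\mathcal{L}_C\psi = m(V) - \int m(V)\,d\nu$ with $\int \psi\,d\nu = 0$, and expand $\psi = \sum_{k\geq 1}(-\lambda_k)^{-1}\langle\phi_k,m(V)\rangle_{L^2(\nu)}\,\phi_k$. The $k=1$ term dominates, because $|\lambda_1|$ is exponentially small in $1/\sigma^2$ by Arrhenius while $|\lambda_k|$ for $k\ge 2$ stays $O(1)$; the semiclassical shape of $\phi_1$ then places the maximum of $\psi$ near $x^\ast$ and its minimum near the next-best well. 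Since the reweighted invariant density is proportional to $\phi_0^\varepsilon\,\nu$, its ratio to $\nu$ is strictly larger near $x^\ast$ than elsewhere, which is exactly the claimed enhanced concentration.

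The hard part is the semiclassical input in the last two paragraphs: one needs quantitative, sign-definite asymptotics showing both that $\phi_1^2\,\nu$ does not itself concentrate solely at $x^\ast$ in the $\sigma\to 0$ limit, and that $\langle \phi_1, m(V)\rangle_{L^2(\nu)}$ carries the sign that makes $\psi(x^\ast)$ exceed $\psi$ at the secondary wells. This is precisely where the additional hypothesis borrowed from \citet{Nier2004QuantitativeAO} enters: their Witten-complex/WKB analysis supplies the exponentially sharp well-localization and the required signed WKB asymptotics. Once these inputs are granted, the remainder is routine bookkeeping with the perturbation series and choosing $\varepsilon$ (possibly depending on $\sigma$) small enough that the $O(\varepsilon^2)$ remainders do not swamp the first-order improvement.
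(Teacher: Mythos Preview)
Your approach is essentially the paper's: first-order Rayleigh--Schr\"odinger perturbation of the self-adjoint operator, with the signs of the corrections determined by the Helffer--Klein--Nier semiclassical localization of the low-lying eigenfunctions. The only cosmetic difference is that the paper first passes to $L^2$ via the Witten-Laplacian unitary $\phi\mapsto\phi\sqrt{\nu}$ before invoking \citet{Nier2004QuantitativeAO}, whereas you stay in $L^2(\nu)$ with $\phi_0\equiv 1$; the computations are identical after this change of gauge.

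One genuine slip to fix: your assertion that ``$|\lambda_k|$ for $k\ge 2$ stays $O(1)$'' is false whenever $V$ has $m_0>2$ local minima. In that regime $\lambda_1,\dots,\lambda_{m_0-1}$ are \emph{all} exponentially small (at different Arrhenius rates), and only $\lambda_k$ for $k\ge m_0$ are bounded away from zero. Consequently your expansion of $\psi$ is not dominated by the single $k=1$ term; each of the first $m_0-1$ terms is individually large. The paper handles this correctly by checking the sign condition $\langle\phi_j, m(V)\phi_0\rangle<0$ (equivalently, in your gauge, $\int \phi_j\, m(V)\, d\nu<0$ after fixing the sign convention for $\phi_j$) for every $j=1,\dots,m_0-1$ separately---the computation is the same as for $j=1$, just with a different secondary well---and then discards $j\ge m_0$ as asymptotically negligible. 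You should do likewise; otherwise both the concentration argument and, in principle, even the gap argument (you need $\int m(V)\phi_1^2\,d\nu$ and not some other $\phi_j^2$ to be the relevant competitor) are incomplete beyond the two-well case.
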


\subsection{Convergence of the Gradient-Free variants}

In the gradient-free situation, the original operator  $\mathcal{D}_C:= \text{div} \big(  \frac{\sigma^2}{2} C \nabla \cdot \big) $ has a trivial invariant measure: uniform distribution, thus no optimization property can be expected when $W$ is not included. We thus turn our results to another way, showing that $\Phi_{\mathcal{D}_C + W}^t p_0$ exponentially converges to a distribution that concentrates on the neighborhood of the global minimum, and as $\sigma \to 0$, the invariant distribution gets more and more concentrate on the global minimum. We now state the result as follows:
\begin{thm}\label{Thm:Gradfree}
    Consider under the space $L^2$, assuming $W$ is bounded on $\mathbb{R}^d$, then $\Phi_{\mathcal{D}_C + W}^t p_0$ converges exponentially to the principle normalized eigenfunction $\mu_\sigma({\bm x})$ of the operator $\mathcal{D}_C + W$. In addition, for any $f \in L^2$ that's smooth compactly supported, $\lim_{\sigma \to 0} \langle\mu_\sigma(\bm x), f\rangle = f(\bm x^\ast) $, where ${\bm x}^\ast$ is the global maximum of $W$, or to say, $\mu_\sigma(\bm x)$ generally 
   converges to $\delta_{\bm x^\ast}(\bm x)$.
\end{thm}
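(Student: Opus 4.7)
The plan is to reduce Thm.~\ref{Thm:Gradfree} to two spectral statements about the self-adjoint operator $\mathcal{D}_C + W$ on $L^2(\mathbb{R}^d)$: first, existence of a principle eigenvalue with a positive eigenfunction and a strict spectral gap, giving exponential convergence of the normalized Feynman--Kac semigroup by the same argument as in Thm.~\ref{Thm:convergence2}; second, a semiclassical analysis of the $\sigma \to 0$ limit showing that the $L^1$-normalized principle eigenfunction concentrates at the global maximum $\bm x^\ast$ of $W$.

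For the first step, $\mathcal{D}_C = \tfrac{\sigma^2}{2}\text{div}(C\nabla \cdot)$ is a non-positive divergence-form operator, self-adjoint on $L^2(\mathbb{R}^d)$, and $W$ is a bounded self-adjoint multiplication, so $\mathcal{D}_C + W$ is self-adjoint with spectrum contained in $(-\infty, \sup W]$. A Persson-type argument gives $\sigma_{\mathrm{ess}}(\mathcal{D}_C + W) \subseteq (-\infty, \limsup_{|\bm x|\to\infty} W]$, so under the natural hypothesis that $W$ has a unique global maximum at $\bm x^\ast$ with $W(\bm x^\ast) > \limsup_{|\bm x|\to\infty} W$, the top of the spectrum is a discrete eigenvalue $\lambda_0$ with a positive eigenfunction $\psi_\sigma$ (by Perron--Frobenius for positivity-preserving semigroups) and a strict gap to $\lambda_1$. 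Exponential convergence of $\Phi^t_{\mathcal{D}_C+W} p_0$ to $\mu_\sigma := \psi_\sigma / \Vert \psi_\sigma \Vert_{L^1}$ at rate $e^{-(\lambda_0 - \lambda_1)t}$ then follows by the spectral decomposition already used in Thms.~\ref{Thm:convergence1}--\ref{Thm:convergence2}.

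For the concentration claim I would pair the min--max characterization
$$\lambda_0(\sigma) \;=\; \sup_{\Vert u\Vert_2 = 1}\left[\int W u^2 \, d\bm x \;-\; \tfrac{\sigma^2}{2}\int \langle C\nabla u,\nabla u\rangle\, d\bm x\right]$$
with the integral identity obtained by integrating the eigenvalue equation $\tfrac{\sigma^2}{2}\text{div}(C\nabla \psi_\sigma) + W\psi_\sigma = \lambda_0 \psi_\sigma$ over $\mathbb{R}^d$: since $\psi_\sigma$ decays at infinity, the divergence term vanishes and one obtains $\int W \mu_\sigma \, d\bm x = \lambda_0(\sigma)$. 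The variational principle yields $\lambda_0(\sigma) \le W(\bm x^\ast)$ trivially, while an $L^2$-normalized Gaussian trial function of width $\sqrt{\sigma}$ centered at $\bm x^\ast$ gives the matching lower bound $\lambda_0(\sigma) \ge W(\bm x^\ast) - O(\sigma)$ from a Taylor expansion of $W$ and the direct computation $\tfrac{\sigma^2}{2}\int |\nabla u|^2 = O(\sigma)$. Hence $\lambda_0(\sigma) \to W(\bm x^\ast)$. Given any open neighborhood $U$ of $\bm x^\ast$, setting $M_U := \sup_{\bm x \notin U} W < W(\bm x^\ast)$, the inequality $\lambda_0(\sigma) \le W(\bm x^\ast)\int_U \mu_\sigma + M_U \int_{U^c}\mu_\sigma$ forces $\int_{U^c}\mu_\sigma \to 0$, which is precisely the asserted weak convergence $\langle \mu_\sigma, f\rangle \to f(\bm x^\ast)$ for smooth compactly supported $f$.

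I expect the main obstacle to be making the spectral-gap step fully rigorous on the unbounded domain $\mathbb{R}^d$. Without an assumption along the lines of $W(\bm x^\ast) > \limsup_{|\bm x|\to\infty} W$, the principle eigenvalue can merge into the essential spectrum and the exponential rate in the first conclusion degenerates, so a clean proof should either state such a decay hypothesis on $W$ or restrict the analysis to a large bounded domain; the semiclassical concentration argument itself is robust and only requires uniqueness of the global maximizer plus local smoothness of $W$.
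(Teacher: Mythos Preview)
Your proposal is correct and in fact cleaner than the paper's own argument, though the two share the same backbone: both identify $\lambda_0(\sigma)$ via the Rayleigh quotient, show $\lambda_0(\sigma)\to W(\bm x^\ast)$ as $\sigma\to 0$ using localized trial functions, and then extract concentration of $\mu_\sigma$ from the eigenvalue equation.

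The substantive differences are two. First, for the exponential-convergence clause you invoke a Persson-type localization of the essential spectrum together with Perron--Frobenius to secure a genuine discrete principal eigenvalue with a strict gap, and you correctly flag that this requires $W(\bm x^\ast)>\limsup_{|\bm x|\to\infty}W$; the paper's proof does not actually argue this part and jumps directly to the $\sigma\to 0$ analysis, so your treatment is more complete here. Second, for the concentration step the paper pairs the eigenvalue equation with a smooth test function $f$ and passes to the limit to obtain $\langle\mu_\sigma,f\rangle\approx\langle (W/W_{\max})\mu_\sigma,f\rangle$, then reads off that any weak limit must be supported on $\{W=W_{\max}\}$. Your route instead integrates the equation once to get $\int W\,\mu_\sigma=\lambda_0(\sigma)$ and turns this into the quantitative mass bound
\[
\int_{U^c}\mu_\sigma \;\le\; \frac{W(\bm x^\ast)-\lambda_0(\sigma)}{W(\bm x^\ast)-M_U}\;\longrightarrow\;0,
\]
which delivers tightness and concentration in one stroke. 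This is more elementary and avoids the tightness issue that the paper's weak-testing argument leaves implicit. Either approach works; yours yields a sharper and more self-contained conclusion.
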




\section{Numerical experiments}
In this section, we first present Fourier Spectral analysis to verify our theoretical analysis of the proposed method. Then, we conduct two inverse problem tests, showing the positive effect as a sampling method when introducing the reweighting/resampling procedure. Finally, we test an optimization task, showing that the resampling technique can help escaping from the local minimum.
\subsection{Fourier spectral method analysis for enhancing spectral gap and concentrating invariant measure }
Let us consider a 1 $d$ periodic problem. We first verify our results in Thm. \ref{Thm:SpectralGapEnhancement}. Let $x \in [0,1)$\footnote{Although our analysis considers the space $\mathbb{R}$, the results can be easily transferred to the periodic case with Asmp. \ref{Asmp:VW} be removed.}, we use Fourier Spectral method \cite{Shen2011SpectralMA} to discretize the differential operator $\mathcal{L}$ and $\mathcal{L} - \varepsilon V$,where
\begin{equation}
    \mathcal{L}f(x) := -\frac{d}{dx}V(x)\frac{d}{dx}f(x) + \frac{d^2}{dx^2} f(x) 
\end{equation}
for any periodic $f \in C^\infty([0, 1))$,  and
\begin{equation}\label{Vfunction}
    V(x) = \cos(9\pi x) - \cos(11\pi x)
\end{equation}
with boundary smoothly modified.
\begin{figure}[h!]\label{SpectralGap}
\subfigure[]{\label{Fig:Eigfunction}
    \includegraphics[width = 0.47\linewidth]{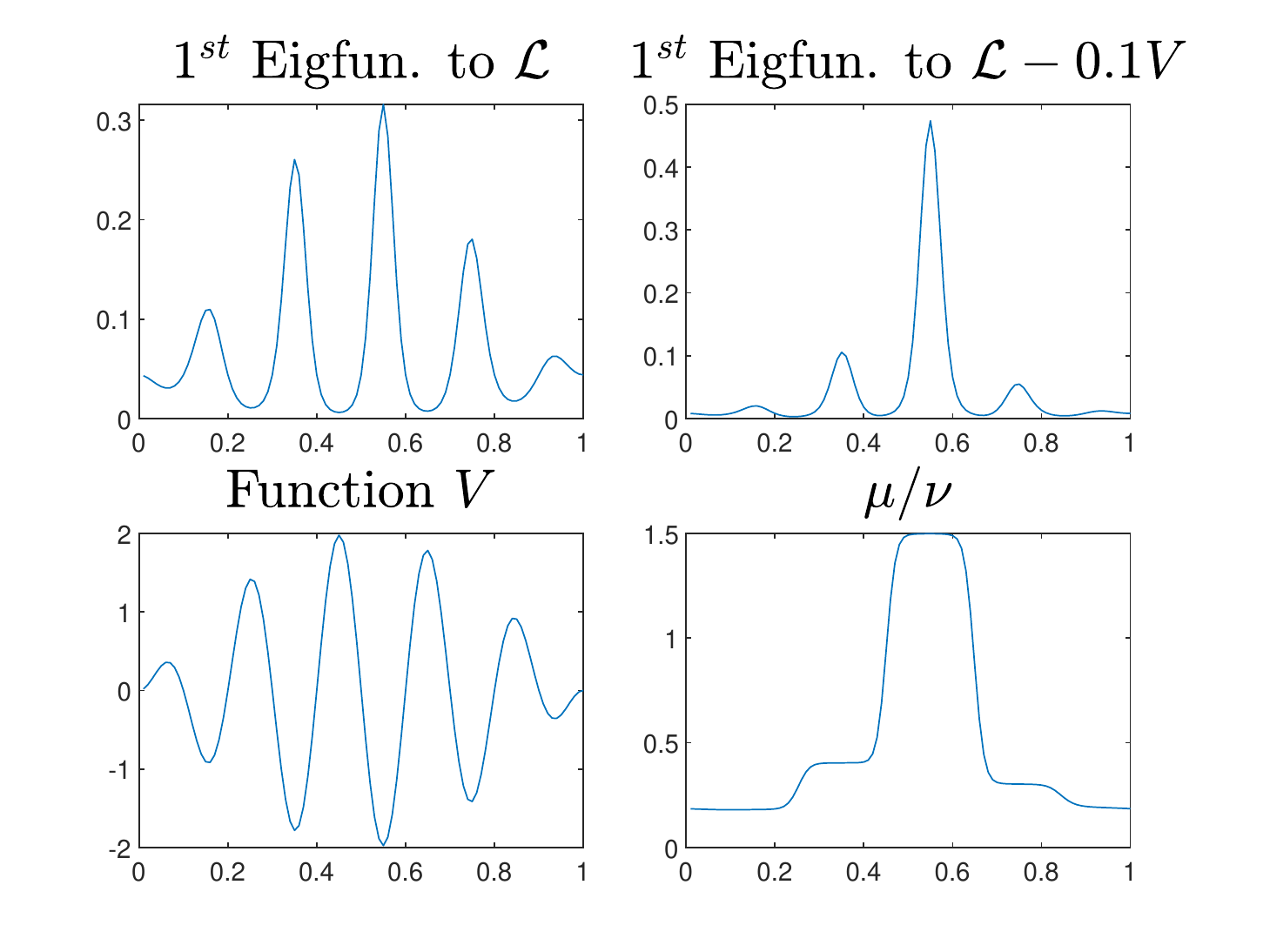}
    }
\subfigure[]{\label{Fig:EigGap}
    \includegraphics[width = 0.47\linewidth]{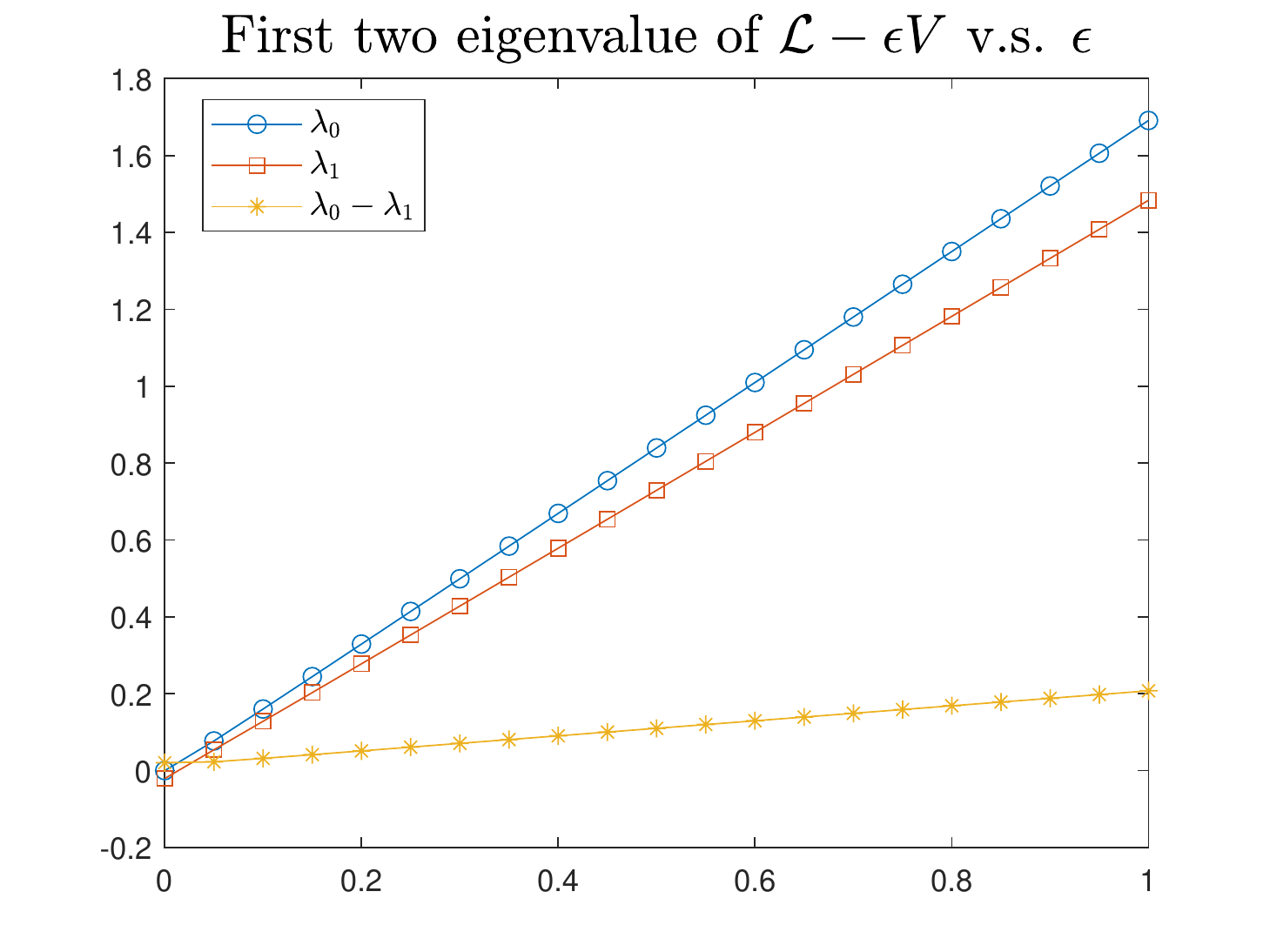}
    }
\vspace{-1em}
\caption{\ref{Fig:Eigfunction} are the graphs to the $1^{st}$ eigfun. $\nu$ to $\mathcal{L}$, $1^{st}$ eigfun. $\mu$ to  $\mathcal{L} - 0.1V$, function $V$, and the quotient of $\mu$ and $\nu$ got by Fourier Spectral method. \ref{Fig:EigGap} is the graph to the first two eigenvalues $\lambda_0(\varepsilon),\lambda_1(\varepsilon)$ of $\mathcal{L} - \varepsilon V$ v.s. $\varepsilon$. }
\vspace{-1em}
\end{figure}

In Fig. \ref{Fig:Eigfunction}, We plot the graphs of the principal eigenfunction $\nu$ to the operator $\mathcal{L}$, the principal eigenfunction $\mu$ to the operator $\mathcal{L}-0.1V$, function $V$ and $\frac{\mu}{\nu}$ respectively. As we expected, $\mu$ concentrates more on the global minimum of $V$ than $\nu$. In Fig. \ref{Fig:EigGap}, we can see $\lambda_0(\varepsilon) - \lambda_1(\varepsilon)$ increasing as $\varepsilon$ increasing, showing the enhancement of spectral gap. This experiment gives a 
visual explanation to Thm. \ref{Thm:SpectralGapEnhancement}: specifically, the eigenfunction  to $\mathcal{L}$ is the Gibbs measure $\nu(\bm x) = \exp(-2V(\bm x)/\sigma^2)$, getting larger when $V $ gets smaller; the eigenfunction $\mu$ to $\mathcal{L}- 0.1V$ has no doubt more mass than $\nu$ that concentrating nearby the global minimum of $V$, and the staircase-like graph of the quotient of $\mu/\nu$ gives a direct explanation to \eqref{eigenfunctionpurturbation}.

Next, let us verify our result in Thm. \ref{Thm:Gradfree}. We use the same space $x\in [0,1)$ and the source term $W(\bm x)  = -V(\bm x)$ where $V(x)$ is the same as in  \eqref{Vfunction}, and test the  analytical property of the following operator
\begin{equation}(\mathcal{D_\sigma} + W)f(x):= \frac{\sigma^2}{2} \frac{d^2}{dx^2}f(x) + W(x) f(x) \end{equation}
for any periodic $f \in C^\infty([0, 1))$. We plot in Fig. \ref{Fig:gradfreeInvariantmeasure} the principal eigenfunction, or invariant distribution density $\mu_\sigma(x)$ of $\mathcal{D}_\sigma$ with different $\sigma$. We also calculate the mass nearby the global minimum: we take the interval $I = [0.44,0.68]$, and calculate $\int_I \mu_\sigma(x) dx$. The results is plotted in Fig. \ref{Fig:gradfreeConcentrateratio}, showing the concentrating tendency when $\sigma \to 0$. This coincides with the result in Thm. \ref{Thm:Gradfree}: the invariant measure concentrates more and more on the global maximum of $W$ as $\sigma \to 0$.

\begin{figure}[h!]
\vspace{-1em}
\subfigure[]{\label{Fig:gradfreeInvariantmeasure}
    \includegraphics[width = 0.47\linewidth]{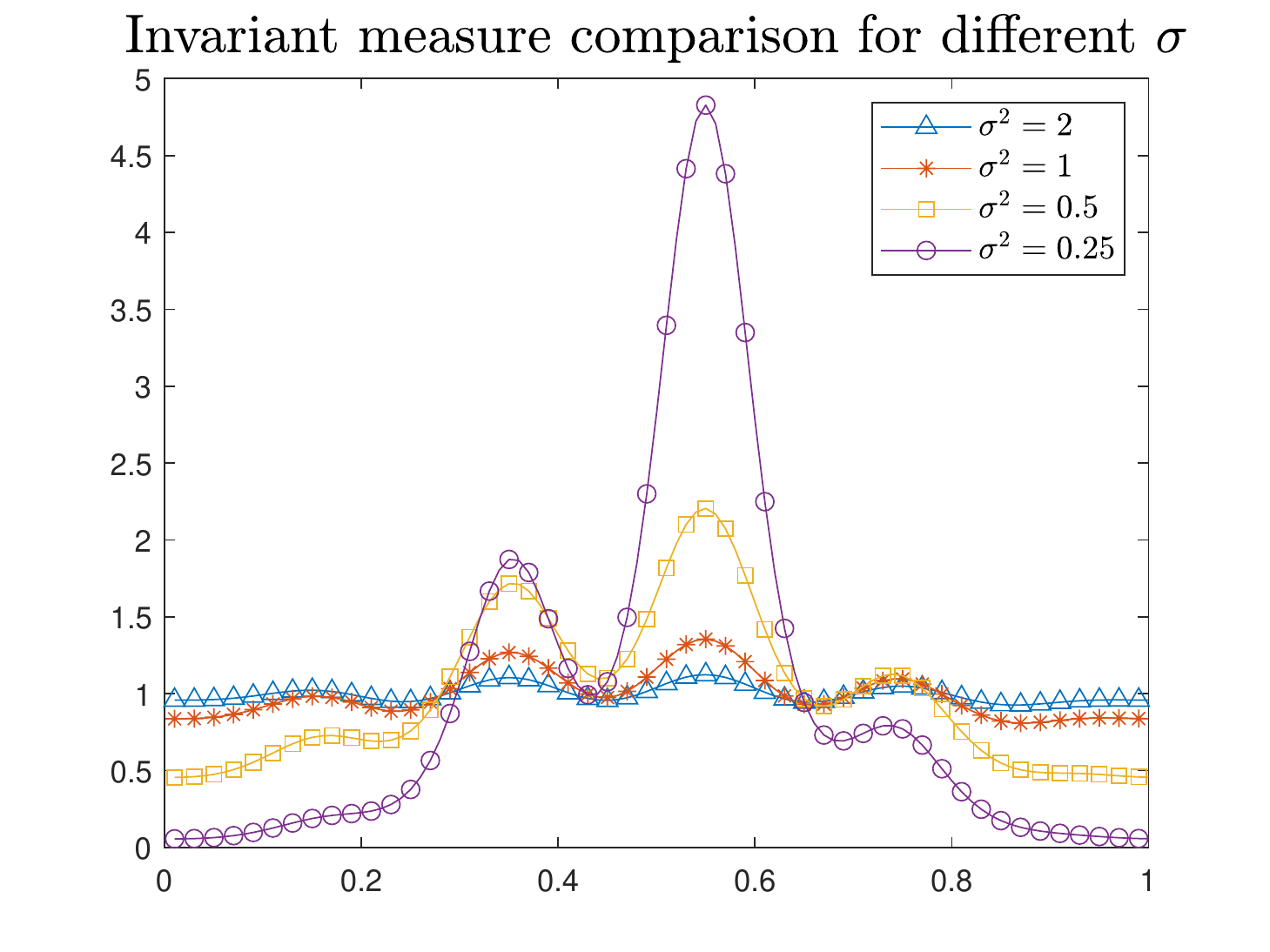}
    }
\subfigure[]{\label{Fig:gradfreeConcentrateratio}
    \includegraphics[width = 0.47\linewidth]{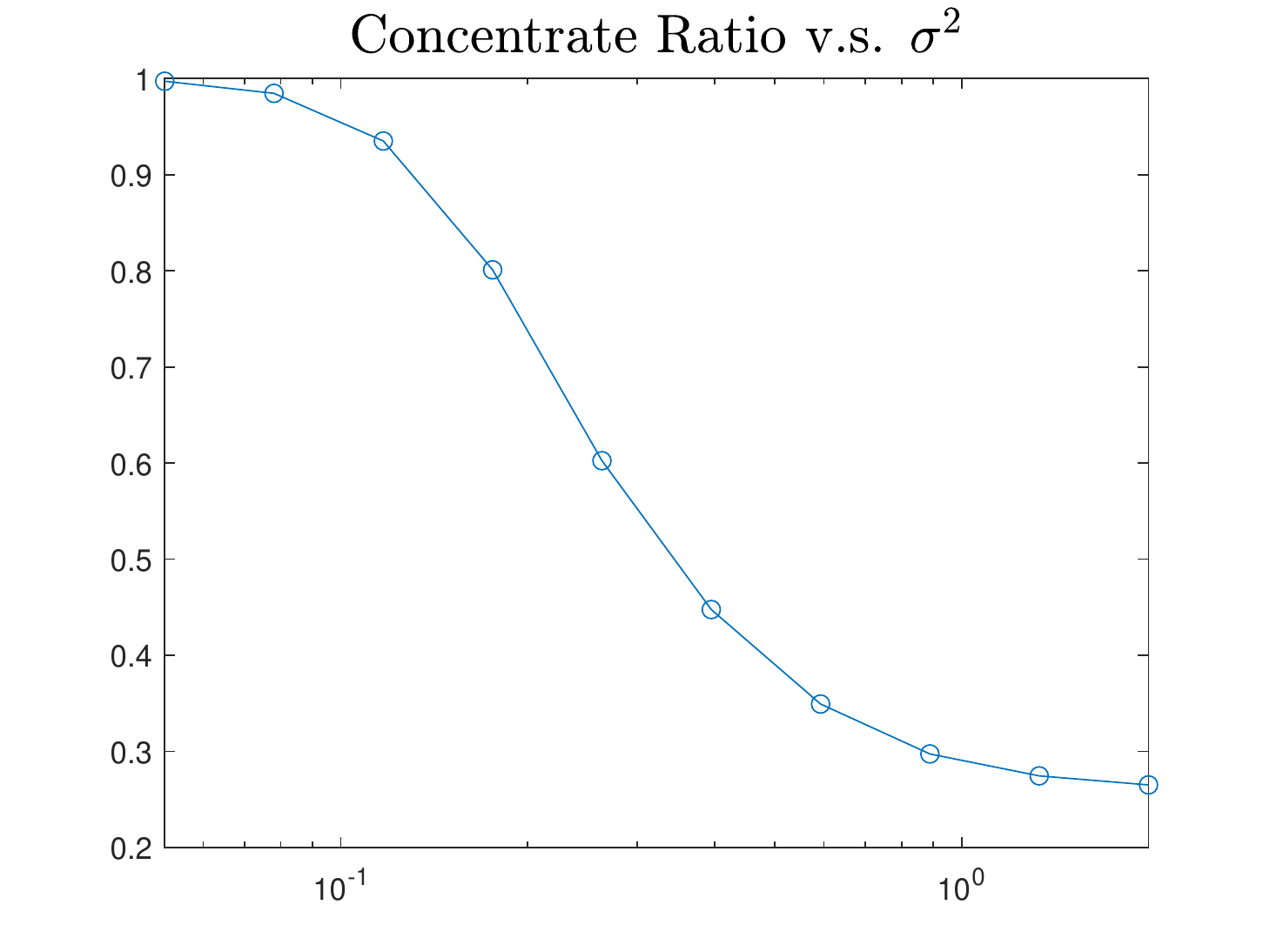}
    }
\vspace{-1em}
\caption{\ref{Fig:gradfreeInvariantmeasure} is the invariant density $\mu_\sigma(x)$  got by Fourier Spectral method under different $\sigma$. \ref{Fig:gradfreeConcentrateratio} plots the ratio $\int_{0.44}^{0.68} \mu_{\sigma}(x) dx$.}
\vspace{-1em}
\end{figure}

\subsection{Numerical tests for inverse problem derivative-free solving and sampling}
In this section, we design some numerical tests in inverse problem solving fields. We compare our RILD algorithm with EKS \cite{GarbunoIigo2019InteractingLD} and EKI \cite{Kovachki2018EnsembleKI}. These problems can be seen as an optimization problem of a least square functional with Tickhonov-Phillips regularization \eqref{Least-square-functional}, thus a derivative-free approximation to the updating rule (\eqref{ls-langevin-diffusion-der-free} for EKS and EKI, or \eqref{rw-langevin-diffusion-der-free} for RILD) can be used to design derivative-free schemes.

We first try to solve a low-dimensional inverse problem. The numerical experiment considered here is the example originally presented by \citet{Ernst2015AnalysisOT}, and also used in \citet{Herty2018KineticMF}. We compare with the result from \citet{GarbunoIigo2019InteractingLD}, and the experimental settings are exactly the same. The forward map is given by the solution of a one-dimensional elliptic boundary value problem as defined in \citet{GarbunoIigo2019InteractingLD}, 
\begin{equation}
    -\frac{d}{du}\left( \exp(x_1) \frac{d}{du} f(u)\right) = 1, u \in [0,1]
\end{equation}
with $f(0) = 0, f(1) = x_2$. The explicit solution is given by
\begin{equation}
    f(u) = x_1 u + \exp(-x_2)\left( -\frac{u^2}{2} + \frac{u}{2}\right).
\end{equation}
Thus we define the forward map 
\begin{equation}\mathcal{G}(\bm x) = \left(f(u_1),f(u_2)\right)^T.\end{equation}
Here $\bm x = (x_1,x_2)^T$ is a constant vector we want to solve, and we have noisy measurements $\bm y = (27.5,79.7)^T$ of $f(\cdot)$ at locations $u_1 = 0.25, u_2 = 0.75$. This can be solved by minimizing the least-square functional defined as in Eq. \ref{Least-square-functional}. We assume observation noise $\Gamma = 0.1^2 I_2$, prior matrix $\Gamma_0 = 10^2 I_2$, and initial ensemble drawn from $N(0,1) \times 
U(90,110).$  The ensemble size is $N = 10^3$. We fix $\sigma = \sqrt{2}$. The stepsize $\tau$ is updated adaptively as in \citet{GarbunoIigo2019InteractingLD}. We take $W(\bm x) = -\Vert \mathcal{G}(\bm x) - \bm y  \Vert^2_\Gamma$. 

We compare our RILD algorithm \ref{alg:RILD} with EKS and EKI algorithms. The key difference between our RILD and EKS in this situation is the use of reweighting/resampling. The results are plotted in Fig. \ref{Fig:EKCompare1}, \ref{Fig:EKCompare2}, and \ref{Fig:EKCompare3}.  From the figure, we can see that our RILD algorithm converges much faster than EKI or EKS algorithms using the same super-parameters, and as our problem is settled as a posterior sampling problem, the ensemble of RILD stops shrinking to the minimum point after some iterations, unless one decrease the diffusion parameter $\sigma$. One could expect, using a decay schedule to $\sigma$, RILD algorithm will perform much better than EKS or EKI algorithms for optimizations.
\begin{figure}[h!]
\vspace{-1em}
\subfigure[]{\label{Fig:EKCompare1}
    \includegraphics[width = 0.3\linewidth]{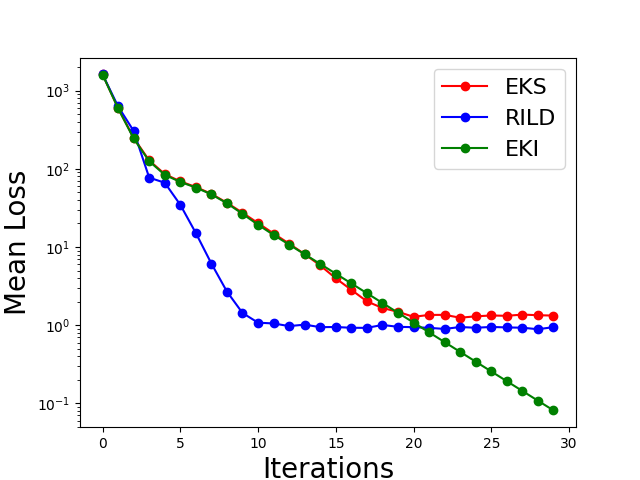}
    }
\subfigure[]{\label{Fig:EKCompare2}
    \includegraphics[width = 0.3\linewidth]{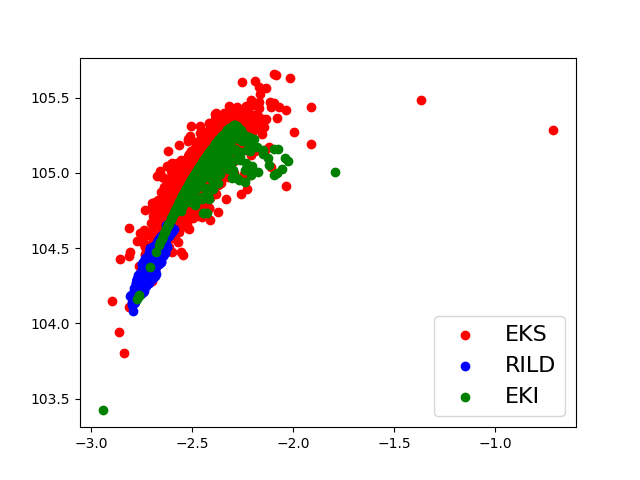}
    }
\subfigure[]{\label{Fig:EKCompare3}
    \includegraphics[width = 0.3\linewidth]{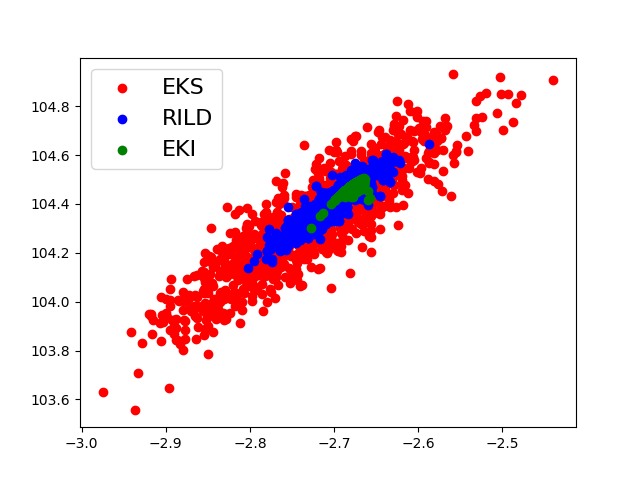}
    }
\vspace{-1em}
\caption{The convergence comparison between RILD, EKS and EKI algorithms. \ref{Fig:EKCompare1} is the mean loss versus iterations, \ref{Fig:EKCompare2} is the ensembles at $15^{th}$ iteration, \ref{Fig:EKCompare3} is the ensembles at $30^{th}$ iteration.}
\vspace{-1em}
\end{figure}

We also tested in a high-dimensional case. Specifically, we define the map $\mathcal{G}: \mathbb{R}^d \to \mathbb{R}^{2d-2}$
\begin{equation*}
    \mathcal{G}(\bm x) = \left(10(x_2 - x_1^2),\cdots,10(x_d - x_{d-1}^2), x_1 , \cdots, x_d \right)^T,
\end{equation*}
$\bm y = (0,\cdots,0, 1, \cdots,1)^T$ with $0$ repeats $d-1$ times and $1$ repeats $d-1$ times. One can verify that
$$\Vert \mathcal{G}(\bm x) - \bm y \Vert^2 = \sum_{i=1}^{d-1} \left(100(x_{i+1} - x_i^2)^2 + (x_i-1)^2\right) $$
is exactly a $Rosenbrock$ function.  We choose $d = 100$, observation noise matrix $\Gamma = 0.1^2 I_{198}$, prior distribution matrix $\Gamma_0 = 10^2 I_{100}$, and initial ensemble drawn from $N(2,0.3^2 I_{100})$. The global solution of $\mathcal{G}(\bm x) = \bm y$ is $\bm x = (1,\cdots,1)^T.$ The ensemble size is fixed to $N = 10^3$. We take $\sigma = \sqrt{2}$ and the stepsize $\tau$ is updated adaptively as before. For RILD algorithm, we choose $W(\bm x) = -5*10^{-3} \Vert \mathcal{G}(\bm x) - \bm y\Vert^2$. Similar to the test before, one can find RILD converges much faster than EKI or EKS in these high-dimensional sampling tasks, and thus perform better in optimization situations. 

\begin{figure}[h!]
\vspace{-1em}
\subfigure[]{\label{Fig:RB1}
    \includegraphics[width = 0.3\linewidth]{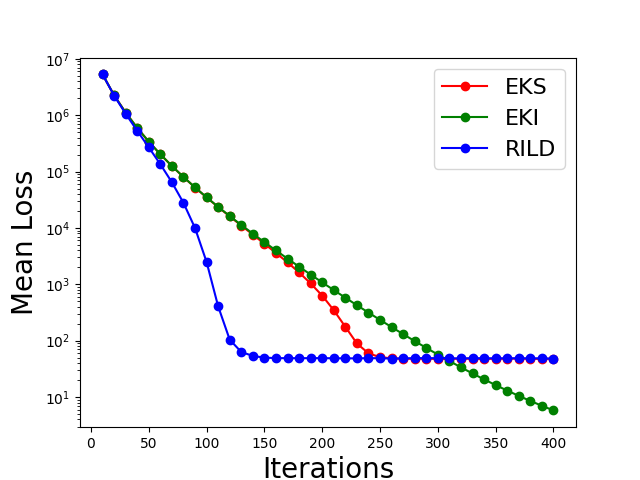}
    }
\subfigure[]{\label{Fig:RB2}
    \includegraphics[width = 0.3\linewidth]{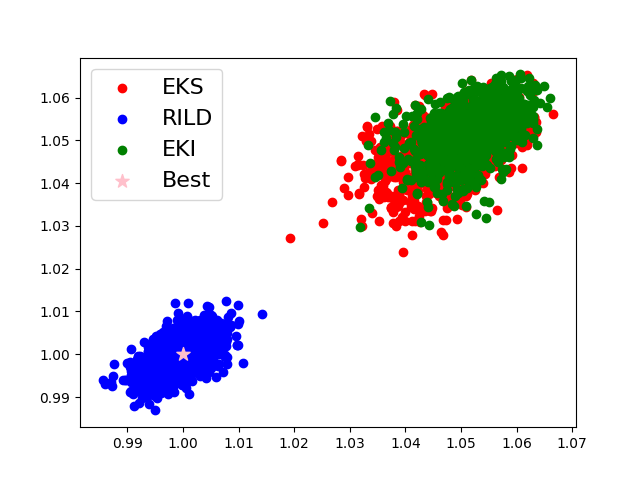}
    }
\subfigure[]{\label{Fig:RB3}
    \includegraphics[width = 0.3\linewidth]{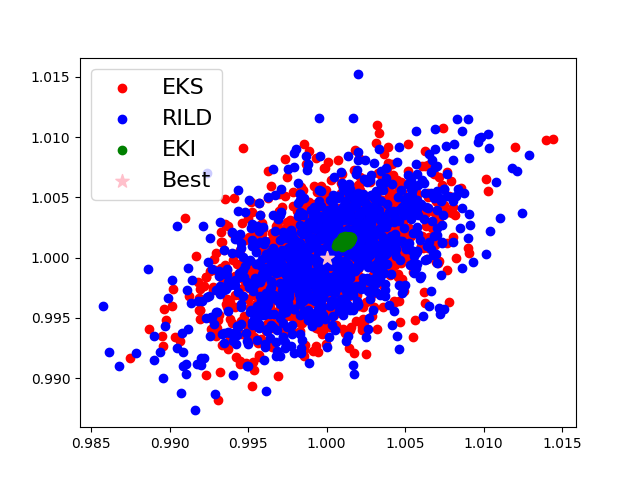}
    }   
\vspace{-1em}
\caption{The comparison between RILD, EKS, and EKI algorithms. \ref{Fig:RB1} is the mean loss v.s. iterrations, \ref{Fig:RB2} is the ensembles at $150^{th}$ iteration, \ref{Fig:RB3} is the ensembles at $400^{th}$ iteration.}
\vspace{-1em}
\end{figure}

\subsection{Numerical tests for highly nonconvex high-dimensional optimization}

We now test our RILD algorithm in a highly nonconvex high-dimensional situation. we test with 100 dimensional $Ackley$ function, which is defined as follows:
\begin{equation}
    V(\bm x) = -a e^ { -b \sqrt{ \frac{1}{d} \sum_{i=1}^{d} x_i^2}} - e^{ \frac{1}{d} \sum_{i=1}^{d} \cos(c x_i)} + a+ e,
\end{equation}
where $\bm x = (x_1,\cdots,x_d)^T, d = 100, a = 20, b = 0.2, c = 2\pi$. As the difficulties mainly raise from the numerious local minimum, covariance modification that is designed for ill-posed problems is not suitable here, we just take $C = I$.

We compare our algorithm RILD in Alg. \ref{alg:RILD} with the classical Gradient Langevin Dynamics (GLD) algorithm. GLD algorithm discretizes a single path of the Langevin Dynamics:
$$\bm x_{n+1} = \bm x_n - \nabla V(\bm x_{n})\tau + \sqrt{\tau \sigma^2} \xi_n. $$
The difference between RILD and GLD is: RILD maintains an ensemble of size $N$  while GLD only maintains 1 individual, and at each step, RILD calculates a weight associated with each individual, then resamples the ensemble according to the weight, see Alg. \ref{alg:RILD}. Now we test if RILD has better ability getting out of local minimums, compared to GLD.

For RILD, we take ensemble size $N = 50$, and randomly pick the initial ensemble\footnote{Such an initial setting creates many difficulties to find the global minimum, as the first term in the $Ackley$ function becomes quickly dominated when $\bm x$ gets away from the origin. } from $N(0,30^2I_{100})$. For GLD, the initial point is randomly chosen from the initial ensemble of RILD. We test a wide range of the stepsize $\tau \in [2,32]$, $\sigma \in [1,16]$, and for each fixed $\tau $ and $\sigma$, we repeat 10 trials to calculate the pass rate: we say one trial is passed, if the RILD or GLD algorithm can find a point $\bm x$ that $V(\bm x)<17$ in $5*10^4$ evaluations\footnote{If one finds a point smaller than 17, the remaining task will be trivial as the first term in the $Ackley$ function gets dominant.}. All trials in all super-parameter settings begin with the same initial ensemble. In \ref{Fig:passrate1} One can see that RILD has a wide range of super-parameter settings to find the true decay directions, while GLD algorithm cannot make it in any tested settings. We also tested GA and PSO under the same initial condition with different super-parameters, and reported the best searching result in Fig. \ref{Fig:passrate2} together with RILD and GLD.

\begin{figure}[h!]
\vspace{-1em}
\subfigure[]{\label{Fig:passrate1}
    \includegraphics[width = 0.6\linewidth, trim = 120 0 180 60 clip]{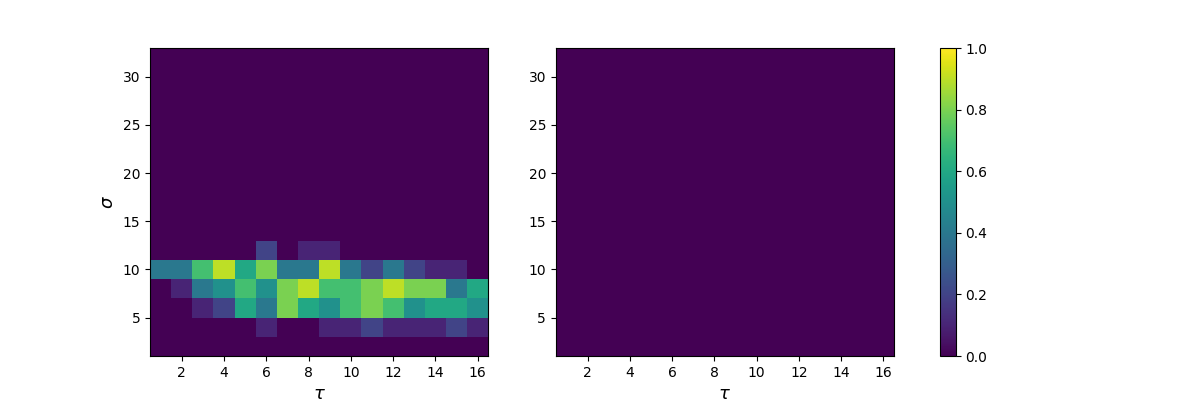}
    }
\subfigure[]{\label{Fig:passrate2}
    \includegraphics[width = 0.34\linewidth]{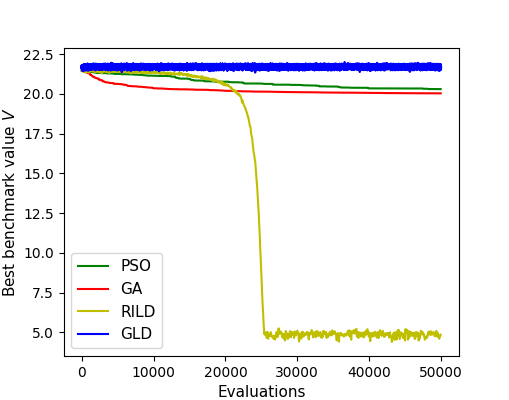}
    }
\vspace{-1em}
\caption{\ref{Fig:passrate1}: pass rates heat map for RILD(left) and GLD(right). \ref{Fig:passrate2}: decay graph for PSO, GA, RILD and GLD where $\tau = 10,\sigma = 5$ for RILD and GLD.  }
\vspace{-1em}
\end{figure}

\section{Conclusion}
In this work, we have demonstrated a methodology for accelerating Langevin Dynamics based algorithms by the addition of the source term $W$ and the use of reweighting/resampling technique -- the RILD algorithm. Our algorithm and analyses shed some light on combining gradient algorithms and genetic algorithms using Partial Differential Equations (PDEs) with provable guarantees.

In the future, we will combine the reweighting technique with higher-order optimization schemes such as momentum accelerated gradient. We will also conduct a finer analysis for convergence with finite particles, which is somehow more important as asymptotic results are only suitable for a large enough ensemble. We expect these studies will bring some insights to design new numerical algorithms.

\clearpage

\bibliography{RILD}
\bibliographystyle{icml2023}
\clearpage
\appendix
\onecolumn
\section{Proofs} \label{sec:proofs}
In this section we give proof to the results in section \ref{theoreticalproperty}.

{\bf Proof  of Lemma ~\ref{Lem:symmetric}}

For any $f,g \in L^2(\nu)\cap C_0^\infty(\mathbb{R}^d)$, note that $W$ is no doubt a self-adjoint operator, we only need to prove for $L_C$. Notice that

\begin{align}
    &Z_\nu\langle f,\mathcal{L}_C  g\rangle_ {L^2(\nu)} = \int_{\mathbb{R}^d} f(\bm x) \left(-\langle C\nabla V(\bm x), \nabla g(\bm x) \rangle + \frac{\sigma^2}{2} \text{div}(C \nabla g(\bm x))\right) e^{-2\sigma^{-2} V(\bm x)} d \bm x \\
    =&\int_{\mathbb{R}^d}  -f(\bm x)\langle C\nabla V(\bm x), \nabla g(\bm x) \rangle e^{-2\sigma^{-2} V(\bm x)} d \bm x + \frac{\sigma^2}{2}\int_{\mathbb{R}^d}  f(\bm x)\text{div}(C \nabla g(\bm x)) e^{-2\sigma^{-2} V(\bm x)}d \bm x \\
    =&\int_{\mathbb{R}^d}  -f(\bm x)\langle C\nabla V(\bm x), \nabla g(\bm x) \rangle e^{-2\sigma^{-2} V(\bm x)} d \bm x - \frac{\sigma^2}{2}\int_{\mathbb{R}^d} \left\langle C\nabla g(\bm x), \nabla( f(\bm x) e^{-2\sigma^{-2} V(\bm x)} )   \right\rangle d \bm x \\
    =&\int_{\mathbb{R}^d}  -f(\bm x)\large(\langle C\nabla V(\bm x), \nabla g(\bm x) \rangle - \langle \nabla V(\bm x), \nabla Cg(\bm x) \rangle\large) e^{-2\sigma^{-2} V(\bm x)} d \bm x - \frac{\sigma^2}{2}\int_{\mathbb{R}^d} \langle C\nabla g(\bm x), \nabla f(\bm x)   \rangle e^{-2\sigma^{-2} V(\bm x)}d \bm x \\
    =& - \frac{\sigma^2}{2}\int_{\mathbb{R}^d} \langle C\nabla g(\bm x), \nabla f(\bm x)   \rangle e^{-2\sigma^{-2} V(\bm x)}d \bm x, 
\end{align}
it is not hard to find that this is a symmetric form, thus $\mathcal{L}_C$ is self-adjoint on $L^2(\nu).$

\QEDA

Next, let us use a changing variable trick to eliminate $C$ for simplicity.

\begin{lem}\label{Lem:affine}
    Given any invertible matrix $A\in \mathbb{R}^{d\times d} $, for any function $f: \mathbb{R}^d \to \mathbb{R}$, we define the function $$\tilde f (\bm x) := f(A\bm x).$$
    We denote the differential operator $\mathcal{\tilde L}_C$ as 
    $$ \mathcal{\tilde L}_C \cdot :=  -\langle C\nabla \tilde V, \nabla \cdot \rangle + \frac{\sigma^2}{2} \text{div}(C\nabla \cdot),$$
    then the following statement is true:
    $$ \mathcal{\tilde L}_{\tilde C} \tilde g (\bm x) = \mathcal{L}_{ C} g (A\bm x),$$
    for any $g: C^\infty(\mathbb{R}^d)$, where $\tilde C = (A^T)^{-1} C A^{-1}$
\end{lem}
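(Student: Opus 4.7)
The plan is to verify $\mathcal{\tilde L}_{\tilde C}\tilde g(\bm x) = \mathcal{L}_{C}g(A\bm x)$ by a direct chain-rule computation, handling the drift term and the divergence term of the two operators separately and then matching them. The lemma is really the statement that a constant symmetric second-order operator transforms tensorially under a linear change of coordinates, so the proof is essentially bookkeeping.

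First I would record the two chain-rule identities that do all the work. For any smooth $f$ and $\bm y = A\bm x$, differentiating $\tilde f(\bm x) = f(A\bm x)$ once and twice gives
$$\nabla \tilde f(\bm x) = A^{T}(\nabla f)(\bm y),\qquad \nabla^{2}\tilde f(\bm x) = A^{T}(\nabla^{2}f)(\bm y)\,A.$$
No other calculus input is required beyond these identities.

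Next I would plug these into the two pieces of $\mathcal{\tilde L}_{\tilde C}\tilde g(\bm x)$. For the drift, using symmetry of $\tilde C$ (inherited from $C$) to absorb a transpose,
$$\langle \tilde C\,\nabla\tilde V(\bm x),\nabla\tilde g(\bm x)\rangle = \langle \tilde C A^{T}\nabla V(\bm y),\,A^{T}\nabla g(\bm y)\rangle = \langle A\tilde C A^{T}\nabla V(\bm y),\nabla g(\bm y)\rangle.$$
For the diffusion, using the identity $\text{div}(M\nabla h)=\operatorname{tr}(M\nabla^{2}h)$ for a constant symmetric matrix $M$,
$$\text{div}(\tilde C\,\nabla\tilde g)(\bm x) = \operatorname{tr}\!\bigl(\tilde C A^{T}(\nabla^{2}g)(\bm y)A\bigr) = \operatorname{tr}\!\bigl((A\tilde C A^{T})(\nabla^{2}g)(\bm y)\bigr) = \text{div}\bigl((A\tilde C A^{T})\nabla g\bigr)(\bm y).$$

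Both terms therefore reduce to $\mathcal{L}_{A\tilde C A^{T}}g$ evaluated at $\bm y = A\bm x$. Imposing the condition $A\tilde C A^{T}=C$, uniquely solved by the $\tilde C$ specified in the lemma, identifies the right-hand side with $\mathcal{L}_{C}g(A\bm x)$ and closes the argument. The only subtleties are the transpose bookkeeping inside the inner product and the divergence, and checking symmetry of $\tilde C$ so that the passage from $\tilde C^{T}$ to $\tilde C$ in the drift step is valid; both are immediate, and no analytic input is needed, making this the simplest of the lemmas in the section.
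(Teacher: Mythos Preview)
Your approach is exactly the paper's: a direct chain-rule computation, and you are in fact more careful than the paper on the diffusion term by passing through $\operatorname{tr}(M\nabla^{2}h)$. One transpose point deserves flagging, though. Your chain rule $\nabla\tilde f(\bm x)=A^{T}(\nabla f)(A\bm x)$ is the correct one (the paper writes $A$ instead of $A^{T}$, a typo), and it leads you to the condition $A\tilde C A^{T}=C$, i.e.\ $\tilde C=A^{-1}C(A^{T})^{-1}$. This is \emph{not} the $\tilde C=(A^{T})^{-1}CA^{-1}$ stated in the lemma, so your sentence ``uniquely solved by the $\tilde C$ specified in the lemma'' is false for general invertible $A$; the paper's stated $\tilde C$ is consistent only with its own transposed chain rule. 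The discrepancy is harmless for the paper's sole use of the lemma, where $A=C^{1/2}$ is symmetric and both formulas coincide, but you should either correct the formula for $\tilde C$ or note the restriction to symmetric $A$ rather than assert agreement.
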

\begin{proof}
    Using the chain rule, we notice that
    $$ (\nabla \tilde f)(\bm x) = A (\nabla f)(A\bm x), $$
    thus
    $$\mathcal{\tilde L}_{\tilde C} \tilde g (\bm x) = -\langle A^T\tilde CA(\nabla V)(A\bm x), (\nabla g)(A\bm x) \rangle + \frac{\sigma^2}{2} \text{div}(A^T \tilde C A \nabla g) (A\bm x) = \mathcal{L}_{ C} g (A\bm x).$$
\end{proof}

Lemma \ref{Lem:affine} shows that, we can use simple linear transformation to simplify the operator. Specifically, if we choose $A = C^{\frac{1}{2}}$, we can transfer $\mathcal{L}_C$ to the operator $ \mathcal{\tilde L}$ in the classical form. Thus we may only consider the case when $C = I$. 

To better understand the spectral property of $\mathcal{L}$, we introduce the Witten Laplacian that's unitarily equivalent to $\mathcal{L}$ and $\mathcal{L}^\dagger$, acting on the non-weighted $L^2$-space. The Witten Laplacian is defined by 
$$ \Delta_{f,h} = (-h \nabla + \nabla f) \cdot (h\nabla + \nabla f) = -h^2\Delta + (\vert \nabla f \vert^2 - h\Delta f), $$
and then we can find the following property:
\begin{prop} \label{Prop:Witten}
    The operator $\mathcal{L}$ can be unitarily changed into the Witten Laplacian form as follows:
    $$-2\sigma^{-2} \Delta_{V/2, \sigma^2/2} = \tilde U \mathcal{L} \tilde U^{-1},$$
    where $\tilde U$ is the unitary transformation
    $$ \tilde U : \left\{  \begin{aligned}L^2(\nu) &\to L^2, \\ \phi &\mapsto \phi \sqrt{\nu}. \end{aligned}  \right.  $$
\end{prop}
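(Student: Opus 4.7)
The plan is a direct computation: verify that the map $\tilde U : \phi \mapsto \phi\sqrt{\nu}$ is an isometry, and then conjugate $\mathcal{L}$ by it and show the result matches $-2\sigma^{-2}\Delta_{V/2,\sigma^2/2}$. Unitarity is immediate from the definition of $L^2(\nu)$, since $\int (\phi\sqrt\nu)^2\,dx = \int \phi^2 \nu\,dx$. What actually needs work is the conjugation. Writing $\rho := \sqrt{\nu}$, one has the key identity
\begin{equation*}
    \frac{\nabla \rho}{\rho} = \nabla \log \rho = -\sigma^{-2}\nabla V,
\end{equation*}
which I expect to be the workhorse that converts all $\rho$-derivatives into expressions in $V$.

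The main computation is to evaluate, for a test function $\psi \in C_0^\infty(\mathbb{R}^d)$,
\begin{equation*}
    (\tilde U \mathcal{L}\tilde U^{-1})\psi \;=\; \rho\,\mathcal{L}(\psi/\rho) \;=\; \rho\Bigl[-\langle\nabla V,\nabla(\psi/\rho)\rangle + \tfrac{\sigma^2}{2}\Delta(\psi/\rho)\Bigr].
\end{equation*}
I would expand $\nabla(\psi/\rho)$ and $\Delta(\psi/\rho)$ by the Leibniz rule (treating $1/\rho$ as a multiplier), substitute the identity above for $\nabla\rho/\rho$, and similarly use the derived expression $\Delta\rho/\rho = \sigma^{-4}|\nabla V|^2 - \sigma^{-2}\Delta V$ obtained by one more differentiation. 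After multiplying through by $\rho$, the terms involving first derivatives of $\psi$ paired with $\nabla V$ should cancel (this is the reason the substitution works: it kills the drift term), leaving only
\begin{equation*}
    \tfrac{\sigma^2}{2}\Delta\psi \;-\; \tfrac{|\nabla V|^2}{2\sigma^2}\,\psi \;+\; \tfrac{\Delta V}{2}\,\psi.
\end{equation*}

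Finally, I would unpack the definition of the Witten Laplacian with the chosen parameters $f = V/2$, $h = \sigma^2/2$: by definition $\Delta_{V/2,\sigma^2/2} = -\tfrac{\sigma^4}{4}\Delta + \tfrac{|\nabla V|^2}{4} - \tfrac{\sigma^2}{4}\Delta V$, and multiplying by $-2\sigma^{-2}$ gives exactly the expression above. Matching confirms the identity. The only non-routine aspect is the algebraic bookkeeping in the Leibniz expansion; the cancellation of the $\langle \nabla V,\nabla\psi\rangle$ cross-terms is the one place a sign error would go unnoticed, so I would organize the computation by collecting terms according to their $\psi$-derivative order (zeroth, first, second) before simplifying. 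No deep obstacle is expected — the result is essentially the standard ground-state transformation between a reversible diffusion generator and its Schrödinger conjugate, adapted to the specific normalization $h = \sigma^2/2$ used here.
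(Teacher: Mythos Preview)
Your proposal is correct: the conjugation computation you outline goes through exactly as you describe, and the cancellation of the first-order $\langle\nabla V,\nabla\psi\rangle$ terms is the key step that turns the drift-diffusion generator into a Schr\"odinger operator. The paper itself does not prove this proposition; it is stated in the appendix as a standard fact (the ground-state/Witten transformation) and then used immediately to replace $\mathcal{L}$ on $L^2(\nu)$ by the Schr\"odinger operator $\mathcal{S}=\tfrac{\sigma^2}{2}\Delta-\tfrac{|\nabla V|^2}{2\sigma^2}+\tfrac{\Delta V}{2}$ on $L^2$. Your direct Leibniz expansion is the canonical verification and fills in precisely what the paper leaves implicit.
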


Thus, we may only consider the spectrum of $$\mathcal{S}:=\tilde U \mathcal{L} \tilde U^{-1}=\frac{\sigma^2}{2} \Delta - \frac{\vert \nabla V \vert^2}{2\sigma^2} + \frac{\Delta V}{2} $$ over $L^2$ in replace of $\mathcal{L}$ over $L^2(\nu)$. Notice that $W$ is invariant under such a transformation, that is, for any $W$, $\tilde U W \tilde U^{-1} = W$, thus the spectrum of $\mathcal{S} + W$ over $L^2$ is equivalent to the spectrum of $\mathcal{L} + W$ over $L^2(\nu)$.

{\bf Proof  of Theorem ~\ref{Thm:convergence1} and \ref{Thm:convergence2}}

By Lemma \ref{Lem:affine} and Proposition \ref{Prop:Witten}, the operator $\mathcal{L} + W$ is transformed into the Schr\"{o}dinger form operator $\mathcal{S}+W$. According to the Assumption \ref{Asmp:VW}, $\mathcal{S}+W$ has discrete real spectrum, and thus  has a sequence of real eigenvalues $\lambda_0 > \lambda_1 \ge \lambda_2\cdots ,$ see \citet{Reed1979MethodsOM}, \citet{Pankov2001IntroductionTS}. 

Recall $$ \Phi^t_{\mathcal{L}+W}(p_0) = \frac{ e^{t(\mathcal{L}^\dagger + W)} p_0}{\int_{\mathbb{R}^d}e^{t(\mathcal{L}^\dagger+W )} p_0(\tilde {\bm x}) d\tilde {\bm x} } =\frac{ e^{t(\mathcal{L}^\dagger + W - \lambda_0)} p_0}{\Vert e^{t(\mathcal{L}^\dagger + W - \lambda_0)}p_0 \Vert_{L^1}} ,$$

Following the same procedure as the Proposition 2. in \citet{Ferr2017ErrorEO}, we have that, for any $p_0 \in L^2(1/\nu)$,
$$ \left\Vert \Phi^t_{\mathcal{L}+W}(p_0) - \frac{\psi_0}{\Vert \psi_0 \Vert_{L^1}} \right\Vert_{L^2(1/\nu)} \le C\left\Vert p_0 - \frac{\psi_0}{\Vert \psi_0 \Vert_{L^1}} \right\Vert_{L^2(1/\nu)} e^{-(\lambda_0 - \lambda_1)t},$$
where $\psi_0$ is the eigenfunction of $\mathcal{L}^\dagger+W$ corresponding to $\lambda_0$. Noticing that, by definition and simple calculation,
$$ \left\Vert p_0 - \frac{\psi_0}{\Vert \psi_0 \Vert_{L^1}} \right\Vert_{L^2(1/\nu)} = \left\Vert \frac{p_0}{\nu} - \frac{\psi_0}{\nu\Vert \psi_0 \Vert_{L^1} } \right\Vert_{L^2(\nu)},$$
and $\frac{\psi_0}{\Vert \psi_0 \Vert_{L^1} \nu}$ is just the normalized eigenfunction $\phi_0$ of $\mathcal{L}+W$ corresponding to $\lambda_0$.

\QEDA

{\bf Proof  of Theorem ~\ref{Thm:SpectralGapEnhancement}}

By Lemma \ref{Lem:affine} and Proposition \ref{Prop:Witten}, we consider the eigenpairs of the operator 
$\mathcal{S}$  and $\mathcal{S}^\varepsilon := \mathcal{S} + \varepsilon m(V)$.
We denote the eigenpair to $\mathcal{S}$  as $\{(\phi_i, \lambda_i), 0 \le i \le \infty\}$, the eigenpair to $\mathcal{S}^\varepsilon$ as $\{(\phi^\varepsilon_i, \lambda^\varepsilon_i), 0 \le i \le \infty\}$. For simplicity we denote $\beta = \frac{2}{\sigma^2}$. The perturbation theory of spectrum (see Chapter 5.4, \cite{Pankov2001IntroductionTS} or \citet{Kato1966PerturbationTF}) reads:
\begin{equation} \lambda^\varepsilon_i = \lambda_i + \varepsilon \frac{\langle\phi_i, m(V) \phi_i\rangle}{\langle\phi_i,\phi_i\rangle} + O(\varepsilon^2), \label{eigenvaluepurturbation}\end{equation}
\begin{equation}\phi_i^\varepsilon = \phi_i + \varepsilon \sum_{j \neq i} \frac{\langle \phi_j, m(V) \phi_i\rangle }{(\lambda_i - \lambda_j)\langle \phi_j,\phi_j\rangle} \phi_j .\label{eigenfunctionpurturbation}\end{equation}
We first prove that, for small enough $\varepsilon$,

$$ \lambda_0^\varepsilon - \lambda_1^\varepsilon > \lambda_0 - \lambda_1,$$ by \eqref{eigenvaluepurturbation} this equivalent to
$$\frac{\langle \phi_0, m(V) \phi_0\rangle }{\langle \phi_0,\phi_0\rangle} > \frac{\langle \phi_1, m(V) \phi_1\rangle}{\langle \phi_1,\phi_1\rangle }.$$

According to Chapter 2.5 in \citet{Lelivre2016PartialDE} and \citet{Nier2004QuantitativeAO}, there are exactly $m_0$ eigenvalues close enough to $\lambda_0$ corresponding to the $m_0$ local minimums of $V$, and for $1 \le j \le m_0$, $\phi_i$ are of the form $\chi_j \exp (-\beta V/2)$, where the functions $\chi_j$ are locally constant over the basins of attractions of the local minima of $V$, we denote the minimum corresponding to each $\phi_j$ as $\bm x_j$.

We then denote the attraction basin as $B_1$ with corresponding local minimum $V(\bm x_1)$; and denote $M = \max_{\bm x \in B_1} V(\bm x)$. According to \citet{Nier2004QuantitativeAO}, for any $\delta >0$, the region $G_1 := \{ V(\bm x) \le M - \delta \}\cap B_1^c$, $\chi_1$ is asymptotically constant as $\beta \to 0$. Thus we may assume
    \begin{equation} \chi_1|_{B_1\cup G_1} \approx I_{B_1} + c(\beta) I_{G_1} . \label{chi}\end{equation}
    By observing that when $\bm x \in (B_1 \cup G_1)^c, V(\bm x) > M - \delta$, thus we have
    $$0 = \langle \phi_1, \phi_0 \rangle = \int_{\mathbb{R}^n} \chi_1 \exp(-\beta V)d\bm x = \int_{B_1} \exp(-\beta V) d\bm x +  c(\beta) \int_{G_1} \exp(-\beta V) d\bm x + O(e^{-\beta(M - \delta)})$$
    and $\bm x_0 \in G_1$, we decude that 
    \begin{equation} c(\beta) = - \frac{\int_{B_1} \exp(-\beta V) d\bm x + O(e^{-\beta(M - \delta)})}{\int_{G_1} \exp(-\beta V) d\bm x} = -O(e^{-\beta(V(\bm x_1)- V(\bm x_0)) } ).\label{cbeta} \end{equation}
    Then 
    \begin{align*}\frac{\langle \phi_1, m(V) \phi_1\rangle}{\langle \phi_1,\phi_1\rangle } &= \frac{\int_{B_1} m(V) \exp(-\beta V) d\bm x   -O(e^{-2\beta(V(x_1)- V(x_0)) } ) \int_{G_1} m(V) \exp(-\beta V) d\bm x}{\int_{B_1} \exp(-\beta V) d\bm x  -O(e^{-2\beta(V(x_1)- V(x_0)) } ) \int_{G_1} \exp(-\beta V) d\bm x} \\
    &= m(V(\bm x_1)) + O(\beta^{-1}),\end{align*}
    as $m(V)$ has no dependence on $\beta$.

    On the other side, 
    $$\frac{\langle \phi_0, m(V) \phi_0\rangle}{\langle \phi_0,\phi_0\rangle } = m(V(\bm x^\ast)) + O(\beta^{-1})$$
    thus we arrive at $$\frac{\langle \phi_0, m(V) \phi_0\rangle }{\langle \phi_0,\phi_0\rangle} > \frac{\langle \phi_1, m(V) \phi_1\rangle}{\langle \phi_1,\phi_1\rangle },$$
    because $m$ is a decreasing function and $V(\bm x^\ast) < V(\bm x_1).$

    By \eqref{eigenfunctionpurturbation}, for the eigenfunction estimation of $\phi_0^\varepsilon$, we aim at proving that
    \begin{equation}\langle \phi_j, m(V) \phi_0\rangle <0. \label{eigfuncondition}\end{equation}
    for $j$ that $\lambda_j$ close enough to $\lambda_0$, thus these term will suppress the height nearby local minimum of $\phi_0$, remaining the height nearby global minimum.
    
    We only prove this for $j = 1$, and for $i = 2,\cdots,m_0$, the procedure are similar. For $j > m_0$, these term in \eqref{eigenfunctionpurturbation} are asynmtotically negotiable.
    
    Note that, by omitting small term (the integration region outside $B_1\cup G_1$) and use \eqref{chi}, \eqref{cbeta} , the statement \eqref{eigfuncondition} is equivalent to 
    $$ \int_{B_1} m(V) e^{-\beta V} d\bm x  - \frac{\int_{B_1} \exp(-\beta V) d\bm x }{\int_{G_1} \exp(-\beta V) d\bm x}\int_{G_1} m(V) e^{-\beta V} d\bm x < 0  $$
    
    $$\frac{\int_{B_1}m(V) e^{-\beta V}d\bm x}{\int_{B_1}e^{-\beta V}d\bm x}<  \frac{\int_{G_1} m(V)e^{-\beta V}d\bm x}{\int_{G_1} e^{-\beta V}d\bm x} $$
    which is asymptotically true when $\beta \to \infty$, as left hand side trends to $m(V(\bm x_1))$ while right hand side trends to $m(V(\bm x^\ast))$.
    
    \QEDA

{\bf Proof  of Theorem ~\ref{Thm:Gradfree}}

We begin with considering the spectrum of $W$. Let $W_{min} = \inf\{W(\bm x)\}, W_{max} = \sup \{W(\bm x)\}$ and $W_{min}$ can be $-\infty$. We assume $W_{max}$ is reachable with one and only one maximum $\bm x^\ast$. 

Note that for any $\lambda > W_{max}$ or $\lambda < W_{min}$, the operator $\lambda - W$ is invertible in $L^2$, thus the spectrum of $W$ is just $[W_{min},W_{max}]$. We now consider the operator $ \frac{\sigma^2}{2} \Delta + W$. By Rayleigh-quotient formula \cite{Parlett1981TheSE}, the principle eigenvalue $\lambda_0(\sigma)$ of $\frac{\sigma^2}{2}\Delta + W$ over $L^2$ is

$$\lambda_0(\sigma) = \sup_{\phi \in H^1} \frac{\int_{\mathbb{R}^d} \phi ((\frac{\sigma^2}{2}\Delta + W) \phi) d \bm x}{\int_{\mathbb{R}^d} \phi^2 d \bm x} = \sup_{\phi \in H^1} (\frac{\int_{\mathbb{R}^d} \phi^2 W d \bm x}{\int_{\mathbb{R}^d} \phi^2 d \bm x} - \frac{\sigma^2}{2}\frac{\int_{\mathbb{R}^d} (\nabla \phi)^T \nabla \phi  d\bm x}{\int_{\mathbb{R}^d} \phi^2 d \bm x}) $$

It's worthwhile noticing that $\lambda_0(\sigma)$ is a decreasing function respect to $\sigma$ , thus for any $\sigma > 0$, $\lambda_0(\sigma) < \lambda_0(0) = W_{max}$.

Take a series of test function $\phi_n (\bm x) \in H^1$ that gradually trends to the delta function $\delta_{\bm x^\ast}(\bm x)$, by noticing that if we take $\sigma_n :=  \sqrt{ 2 \frac{1}{n} \frac{\langle \phi_n, \phi_n\rangle}{\langle \nabla\phi_n, \nabla\phi_n\rangle}}$, we have

$$\lim_{\sigma \to 0} \lambda_0(\sigma) \ge \lim_{n\to\infty}(\frac{\langle \phi_n, W\phi_n\rangle}{\langle \phi_n, \phi_n\rangle} - \frac{\sigma^2_n}{2}\frac{\langle \nabla\phi_n, \nabla\phi_n\rangle}{\langle \phi_n, \phi_n\rangle}) = W_{max}  $$ 

Thus $\lambda_0(\sigma)$ is continuous at $\sigma = 0$.

Now denote the principal normalized eigenfunction of $\frac{\sigma^2}{2} \Delta + W$ is $\psi_0(\sigma)$. For any $f \in L^2$ that's  smooth
compactly supported, noticing that

\begin{align}\langle \psi_0(\sigma) , f \rangle &= \lambda_0(\sigma)^{-1} \langle  (\frac{\sigma^2}{2} \Delta + W)\psi_0(\sigma) , f \rangle\\
& =\lambda_0(\sigma)^{-1} \frac{\sigma^2}{2} \langle \psi_0(\sigma), \Delta f \rangle + \lambda_0(\sigma)^{-1} \langle  \psi_0(\sigma), W f \rangle,
\end{align}

taking $\sigma \to 0$, the first term trends to 0, and thus

$$\lim_{\sigma \to 0}\langle \psi_0(\sigma) , f \rangle  =   \lim_{\sigma \to 0}\langle \frac{W}{W_{max}}\psi_0(\sigma) , f \rangle$$

This implies that as  $\sigma \to 0,  \psi_0(\sigma)$ acts closer and closer to $\delta_{\bm x^\ast}(\bm x)$.

\QEDA
\end{document}